\documentclass{article} %

    \usepackage{arxiv}

\usepackage{amsmath,amsfonts,bm}

\def\eqref#1{equation~\ref{#1}}

\def\1{\bm{1}}

\DeclareMathAlphabet{\mathsfit}{\encodingdefault}{\sfdefault}{m}{sl}
\SetMathAlphabet{\mathsfit}{bold}{\encodingdefault}{\sfdefault}{bx}{n}

\newcommand{\E}{\mathbb{E}}

\newcommand{\R}{\mathbb{R}}

\newcommand{\KL}{D_{\mathrm{KL}}}

\DeclareMathOperator*{\argmin}{arg\,min}

\DeclareMathOperator{\Tr}{Tr}

\usepackage{hyperref}
\usepackage{url}
\usepackage{amsthm}
\usepackage{mathtools}
\usepackage{booktabs}
\usepackage{multirow}
\usepackage{graphicx}
\usepackage{thm-restate}
\usepackage{subcaption} %
\usepackage{makecell}
\usepackage{wrapfig}
\usepackage{float}

\usepackage{tikz}
\usetikzlibrary{arrows.meta,patterns}

\title{Output-aware Residual Stream Pruning for Large Language Models}

\author{Chayne Thrash, Kevin Chen \& Soheil Kolouri \\
Department of Computer Science\\
Vanderbilt University\\
Nashville, TN 37235, USA \\
\texttt{\{chayne.thrash,kevin.l.chen,soheil.kolouri\}@vanderbilt.edu} 
}

\begin{document}

\maketitle

\begin{abstract}
Residual stream pruning methods reduce inference cost by shrinking the model's hidden dimension, but existing approaches typically choose these dimensions by minimizing activation reconstruction error. This criterion implicitly treats all perturbation directions as equally important, ignoring the sensitivity of downstream layers. We introduce a sensitivity-aware approach to residual-stream pruning that directly accounts for this direction-dependent sensitivity. Using a second-order approximation to the output KL divergence, we characterize the effect of a residual-stream perturbation through both its activation covariance and the local sensitivity of the model output. The resulting subspace selection objective couples these two quantities, but is difficult to optimize directly. We derive a tractable spectral upper bound that reduces subspace selection to an eigendecomposition of a sensitivity-weighted covariance matrix, retaining the efficiency and structural simplicity of rotation-based pruning methods. Across several instruction-tuned language model families, our method consistently reduces calibration KL divergence relative to activation-only pruning and improves perplexity and downstream task performance over a range of compression levels. Our results show that preserving activation energy alone is insufficient for residual-stream pruning, and that explicitly accounting for how perturbations propagate to the model output provides a more effective criterion for selecting dimensions to remove.

\end{abstract}

\section{Introduction}

The recent advancements in large language models (LLMs) \citep{gpt, llama_3, mistral, phi3} have led to widespread adoption across a growing range of applications. At the same time, the increasing size of these models has made deployment increasingly expensive, motivating the development of smaller and more efficient models that retain strong performance on complex tasks.

A variety of techniques have been proposed to reduce the computational and memory requirements of LLMs. Quantization \citep{optq, awq} decreases memory requirements by reducing the number of bits used to represent model weights and activations. Low-rank approximation methods \citep{svd_llm, asvd} compress weight matrices using low-rank factorizations, while knowledge distillation \citep{minillm} trains a smaller student model to mimic the behavior of a larger teacher. Structured pruning has also received considerable attention because it can directly reduce both model size and inference cost without requiring customized kernels or costly retraining.

Most structured pruning methods remove neurons, attention heads, or other structures within the self-attention and MLP blocks while leaving the residual-stream dimension unchanged. Residual-stream pruning \citep{slicegpt}, in contrast, directly reduces the input and output dimensions of these blocks. By exploiting the invariance of transformer layers to orthogonal changes of basis in the residual stream, such methods can identify and remove entire residual-stream directions while absorbing the corresponding rotations into adjacent linear layers. \citet{slicegpt} selects these directions to minimize reconstruction error in the residual-stream activations. However, minimizing activation error alone does not account for how strongly downstream computations depend on the removed directions: two perturbations of similar magnitude can have substantially different effects on the model's output distribution.

In this work, we introduce a sensitivity-aware approach to residual-stream pruning that balances the magnitude of the induced activation error with the sensitivity of the model's output to that error. We derive our objective from a second-order approximation to the KL divergence between the output distributions of the original and pruned models, yielding a curvature-weighted measure of residual-stream importance. Although directly optimizing this objective is difficult, we derive a tractable upper bound whose solution requires only a small number of eigendecompositions and can therefore be incorporated into existing residual-stream pruning pipelines. Across multiple model families and compression rates, our approach more effectively preserves the output distribution of the original model and consistently improves downstream performance over activation-only residual-stream pruning.

\section{Related Work}

\paragraph{Pruning Large Language Models.}
The growing computational and memory requirements of large language models (LLMs) have motivated extensive work on post-training pruning. Classical approaches commonly rely on iterative pruning and retraining \citep{mag_pruning,lottery_ticket,movement_prune}, which becomes increasingly costly at LLM scale. Recent methods therefore emphasize one-shot pruning. SparseGPT \citep{sparsegpt} scales second-order techniques such as Optimal Brain Surgeon \citep{optimal_brain} to billion-parameter models, while Wanda \citep{wanda} combines weight and activation magnitudes. However, the resulting unstructured sparsity generally requires specialized kernels or hardware to realize inference speedups.

Structured pruning instead removes entire network components, directly reducing model size and computation. LLM-Pruner \citep{llm_pruner} uses first-order information to estimate the importance of coupled structures, while LLM-Surgeon \citep{llm_surgeon} applies a Kronecker-factored empirical Fisher to prune rows and columns of weight matrices. Other methods use activation statistics or layerwise reconstruction objectives to remove channels, attention heads, or other internal structures \citep{flap,ziplm,osscar}. In contrast, SliceGPT \citep{slicegpt} reduces the dimensionality of the residual stream itself. Exploiting the rotational invariance of Transformer representations, it uses PCA to identify low-variance directions that can be removed, producing smaller dense weight matrices compatible with standard hardware. Our work builds on this framework by additionally accounting for the sensitivity of the model output when selecting residual-stream directions to remove.

\paragraph{Ouput-Aware Model Compression.}
A complementary line of work considers how compression-induced perturbations affect model behavior. Many post-training methods optimize local quantities such as weight magnitude or activation reconstruction error, whereas sensitivity-aware approaches incorporate gradient or curvature information. LLM-Pruner \citep{llm_pruner} uses first-order Taylor approximations for structured importance estimation, while LLM-Surgeon \citep{llm_surgeon} and GFWSVD \citep{gfwsvd} use Kronecker-factored empirical Fisher approximations for structured pruning and low-rank compression, respectively.

More recent methods optimize objectives tied directly to changes in the model's predictive distribution. YAQA \citep{yaqa} derives a post-training quantization objective from a second-order approximation to the full-model KL divergence. EvoPress \citep{evopress} instead uses evolutionary search to optimize pruning and quantization configurations according to output-distribution KL, with T\'yr-the-pruner \citep{tyr} specializing this approach to structured pruning. Our approach is most closely related to YAQA: we use second-order sensitivity information to approximate changes in the model's output distribution and combine it with activation statistics to identify residual-stream directions that can be removed with minimal effect on model behavior.

\begin{figure*}[!t]
    \centering
    \resizebox{\textwidth}{!}{\ifcsname tikz@library@arrows.meta@loaded\endcsname\else%
  \errmessage{main_figure.tex: add \string\usetikzlibrary{arrows.meta,patterns} to your preamble}%
\fi%
\ifcsname tikz@library@patterns@loaded\endcsname\else%
  \errmessage{main_figure.tex: add \string\usetikzlibrary{arrows.meta,patterns} to your preamble}%
\fi%
\begingroup%
\definecolor{rspBlue}{RGB}{52,118,190}%
\definecolor{rspRed}{RGB}{222,76,76}%
\definecolor{rspNavy}{RGB}{20,40,110}%
\definecolor{rspBoxBlue}{RGB}{226,237,250}%
\definecolor{rspBoxBlueLine}{RGB}{88,138,205}%
\definecolor{rspBoxPeach}{RGB}{253,236,226}%
\definecolor{rspBoxPeachLine}{RGB}{220,158,116}%
\definecolor{rspBoxGreen}{RGB}{234,244,231}%
\definecolor{rspBoxGreenLine}{RGB}{124,170,112}%
\definecolor{rspBlueDark}{RGB}{24,64,128}%
\definecolor{rspBlueLight}{RGB}{135,180,230}%
\definecolor{rspOrangeDark}{RGB}{204,92,16}%
\definecolor{rspOrangeLight}{RGB}{250,178,102}%
\tikzset{
  flow/.style     ={-{Stealth[length=4.5pt,width=4pt]}, line width=0.6pt, black!85},
  wire/.style     ={line width=0.6pt, black!85},
  axis/.style     ={{Stealth[length=3.2pt,width=3pt]}-{Stealth[length=3.2pt,width=3pt]},
                    line width=0.45pt, black!75},
  dir/.style      ={-{Stealth[length=5.5pt,width=4.5pt]}, line width=1.1pt, rspNavy},
  wbox/.style     ={rounded corners=3pt, line width=0.5pt},
  ghostfill/.style={fill=black!6, rounded corners=3pt},
  ghosthatch/.style={pattern=north east lines, pattern color=black!38, rounded corners=3pt},
  ghostline/.style={draw=black!35, line width=0.4pt, rounded corners=3pt},
  panel/.style    ={rounded corners=5pt, line width=0.45pt, draw=black!30},
  plabel/.style   ={font=\small\bfseries, anchor=north west, inner sep=3pt},
  ptitle/.style   ={font=\footnotesize\bfseries, black!85},
  rowtitle/.style ={font=\footnotesize\bfseries, black!80},
  arrlabel/.style ={font=\footnotesize\bfseries, black!85, inner sep=1.5pt, above},
  mlabel/.style   ={font=\small},
}%
\newcommand{\gaussplot}[4]{%
\begin{scope}[shift={(#1)}, scale=#3]
  \foreach \lv/\al in {1/0.20, 0.66/0.30, 0.34/0.45}{%
    \fill[rspBlue, fill opacity=\al, rotate=8]   (0,0) ellipse ({1.30*\lv} and {0.40*\lv});}
  \foreach \lv/\al in {1/0.20, 0.66/0.30, 0.34/0.45}{%
    \fill[rspRed, fill opacity=\al, rotate=-15] (0,0) ellipse ({0.36*\lv} and {1.15*\lv});}
  \draw[axis] (-1.55,0) -- (1.55,0);
  \draw[axis] (0,-1.30) -- (0,1.30);
  \draw[dir]  (0,0) -- (#2:1.32);
  \ifnum#4=1
    \node[mlabel] at (0.36,1.28) {$H$};
    \node[mlabel] at (1.50,0.46) {$C$};
  \fi
\end{scope}}%
\newcommand{\barchart}[4]{%
\begin{scope}[shift={(#1)}]
  \draw[black!45, line width=0.3pt] (-0.05,0) -- (1.31,0);
  \foreach \h [count=\i from 0] in {#3}{%
    \fill[#2] ({0.34*\i},0) rectangle ({0.34*\i+0.24},{0.44*\h});}
  \foreach \lab [count=\i from 0] in {#4}{%
    \node[font=\scriptsize, black!70, inner sep=0pt] at ({0.34*\i+0.12},-0.13) {\lab};}
\end{scope}}%
\newcommand{\distbox}[3]{%
\begin{scope}[shift={(#1)}]
  \draw[black!45, rounded corners=6pt, line width=0.45pt, fill=white] (-3.0,-1.37) rectangle (3.0,1.3);
  \node[rowtitle] at (0,1.08) {Activation distribution};
  \barchart{-2.80,0.42}{rspBlueDark}{0.55,1,0.40,0.73}{$x_1$,$x_2$,$x_3$,$x_4$}
  \barchart{1.53,0.42}{rspBlueLight}{0.50,1,0.38,0.75}{$x_1$,$x_2$,$x_3$,$x_4$}
  \draw[flow] (-1.15,0.56) -- node[arrlabel] {Low recon.\ error} (1.15,0.56);
  \node[rowtitle] at (0,0.0) {Next-token distribution};
  \barchart{-2.80,-0.68}{rspOrangeDark}{1,0.50,0.20,0.10}{A,B,C,D}
  \barchart{1.53,-0.68}{rspOrangeLight}{#3}{A,B,C,D}
  \draw[flow] (-1.15,-0.58) -- node[arrlabel] {#2} (1.15,-0.58);
  \node[font=\footnotesize\bfseries, black!75] at (-2.17,-1.13) {Original};
  \node[font=\footnotesize\bfseries, black!75] at ( 2.16,-1.13) {Pruned};
\end{scope}}%
\begin{tikzpicture}[line cap=round, line join=round, font=\footnotesize]
\draw[panel] (0,0) rectangle (4.8,9.8);
\node[plabel] at (0.02,9.78) {(a)};
\foreach \i in {0,...,5}{%
  \draw[rspBlue!60!black, line width=0.3pt, fill=rspBlue!80] ({0.8+0.36*\i},0.85) rectangle ++(0.32,0.32);}
\foreach \i in {6,7,8}{%
  \draw[black!45, line width=0.3pt, fill=black!7] ({0.8+0.36*\i},0.85) rectangle ++(0.32,0.32);}
\node[mlabel] at (1.86,0.44) {$(V^{\ell})^{\!\top}x$};
\node[mlabel] at (3.48,0.44) {$(U^{\ell})^{\!\top}x$};
\draw[wire] (2.4,1.17) -- (2.4,2.05);
\draw[wire] (0.95,2.05) -- (3.25,2.05);
\fill[black!85] (2.4,2.05) circle (1.5pt);
\draw[flow] (3.25,2.05) -- (3.25,2.6);
\draw[flow] (0.95,2.05) -- (0.95,4.6);
\draw[densely dashed, rounded corners=9pt, line width=0.55pt, rspNavy!70] (2.05,2.3) rectangle (4.55,8.3);
\begin{scope}
  \clip[rounded corners=3pt] (2.3,2.6) rectangle (4.2,4.5);
  \fill[rspBoxBlue]  (2.3,2.6)   rectangle (3.567,4.5);
  \fill[black!6]  (3.567,2.6) rectangle (4.2,4.5);
  \pattern[pattern=north east lines, pattern color=black!38] (3.567,2.6) rectangle (4.2,4.5);
\end{scope}
\draw[black!50, line width=0.45pt] (3.567,2.6) -- (3.567,4.5);
\draw[black!50, line width=0.5pt, rounded corners=3pt] (2.3,2.6) rectangle (4.2,4.5);
\node[mlabel] at (2.933,3.55) {$W_{\mathrm{in}}V^{\ell}$};
\draw[flow] (3.25,4.5) -- (3.25,4.9);
\draw[rspBoxGreenLine, fill=rspBoxGreen, wbox] (2.72,4.9) rectangle (3.78,5.7);
\node[mlabel] at (3.25,5.3) {$\sigma(\cdot)$};
\draw[flow] (3.25,5.7) -- (3.25,6.1);
\begin{scope}
  \clip[rounded corners=3pt] (2.3,6.1) rectangle (4.2,8.0);
  \fill[rspBoxBlue]  (2.3,6.733) rectangle (4.2,8.0);
  \fill[black!6]  (2.3,6.1)   rectangle (4.2,6.733);
  \pattern[pattern=north east lines, pattern color=black!38] (2.3,6.1) rectangle (4.2,6.733);
\end{scope}
\draw[black!50, line width=0.45pt] (2.3,6.733) -- (4.2,6.733);
\draw[black!50, line width=0.5pt, rounded corners=3pt] (2.3,6.1) rectangle (4.2,8.0);
\node[mlabel] at (3.25,7.367) {$(V^{\ell+1})^{\!\top}W_{\mathrm{out}}$};
\draw[rspBoxPeachLine, fill=rspBoxPeach, wbox] (0.15,4.6) rectangle (1.75,5.9);
\node[mlabel] at (0.95,5.25) {$(V^{\ell+1})^{\!\top}V^{\ell}$};
\node[circle, draw=black!85, line width=0.6pt, minimum size=0.44cm, inner sep=0pt] (plus) at (3.25,8.85) {};
\draw[black!85, line width=0.8pt] (plus.center) ++(-0.115,0) -- ++(0.23,0)
                                   (plus.center) ++(0,-0.115) -- ++(0,0.23);
\draw[flow] (3.25,8.0) -- (plus.south);
\draw[flow] (0.95,5.9) -- (0.95,8.85) -- (plus.west);
\draw[flow] (plus.north) -- (3.25,9.6);
\draw[panel] (5.1,3.7) rectangle (17.8,9.8);
\node[plabel] at (5.12,9.78) {(b)};
\node[ptitle] at (8.275,9.5)  {Activation Error Only};
\node[ptitle] at (14.625,9.5) {Activation and Output-Sensitivity Error};
\gaussplot{8.275,7.9}{10}{0.95}{1}
\gaussplot{14.625,7.9}{45}{0.95}{1}
\distbox{8.275,5.2}{Large KL}{0.90,0.85,0.15,0.20}
\distbox{14.625,5.2}{Low KL}{1,0.55,0.20,0.12}
\draw[panel] (5.1,0) rectangle (17.8,3.4);
\node[plabel] at (5.12,3.38) {(c)};
\foreach \x/\ang/\lab in {7.217/10/{\tau=\infty}, 11.45/75/{\tau=0}, 15.683/45/{\tau=\tau^{\star}}}{%
  \node[font=\normalsize] at (\x,3.08) {$\lab$};
  \gaussplot{\x,1.5}{\ang}{0.92}{0}
}
\end{tikzpicture}%
\endgroup%
}
\caption{Overview of our structured residual-stream pruning method. (a) We remove residual-stream dimensions by rotating unimportant directions into removable coordinates, allowing corresponding columns and rows of the attention and MLP weight matrices to be deleted. (b) Directions are selected using both activation variance and output sensitivity. (c) Our objective balances activation reconstruction and output sensitivity through \(\tau\), interpolating between the two extremes.}
\vspace{-1\baselineskip}
    \label{fig:main}
\end{figure*}

\section{Method}

In this section, we present an output-sensitive approach to residual-stream pruning for large language models. An overview of both residual stream pruning and our proposed method can be seen in Figure~\ref{fig:main}. We first describe residual stream pruning and review the objective used by SliceGPT. We then derive an objective that incorporates both activation reconstruction error and the sensitivity of the model output to removed directions. Finally, we introduce an efficient approximation that generates candidate pruning bases using only a small number of eigendecompositions.

\subsection{Residual-stream pruning}

Orthogonal transformations may be merged into the linear layers along the residual stream allowing for arbitrary rotations to be applied to the activations. We review this reparameterization in Appendix~\ref{sec:orthog_inv}. This invariance can be used to identify bases in which selected directions of the residual stream can
be removed with minimal impact on model performance. Formally, let
\[
    Q
    =
    \begin{bmatrix} V & U \end{bmatrix}
    \in\mathbb{R}^{d\times d},
    \qquad
    V\in\mathbb{R}^{d\times(d-k)},
    \quad
    U\in\mathbb{R}^{d\times k},
\]
where \(V\) spans the directions retained after pruning and \(U\) spans
those to be removed. 

Removing the \(U\)-coordinates leaves the reduced representation
\(V^{\top}x\in\mathbb{R}^{d-k}\). Because the associated projections can
be absorbed into adjacent linear layers, the truncation reduces the
physical width of the model, yielding smaller dense weight matrices.

Due to the residual connection, this construction requires a common orthogonal basis throughout the network. To allow for layer-dependent orthogonal bases, a transition matrix
\[
\left(V^{\ell+1}\right)^{\top}V^{\ell}
    \in\mathbb{R}^{(d-k)\times(d-k)}.
\]
is applied along the residual connection. These transition matrices introduce additional storage and computation along the residual path. Their cost, however, scales as \(\mathcal{O}\bigl((d-k)^2\bigr)\) and decreases quadratically with the retained width.

\section{Direction selection in SliceGPT}

The remaining question is how to determine which directions should be
removed. SliceGPT \citep{slicegpt} selects directions whose removal minimizes the activation reconstruction error on a calibration set. Let
\(s\sim\mathcal{D}_{\mathrm{cal}}\) denote a calibration sequence and let
\(x_i\in\mathbb{R}^{d}\) be the activation associated with its \(i\)-th
token at a fixed layer. Define the activation second-moment matrix
\[
    C
    =
    \mathbb{E}_{s\sim\mathcal{D}_{\mathrm{cal}}}
    \left[
        \frac{1}{|s|}
        \sum_{i=1}^{|s|}x_i x_i^{\top}
    \right].
\]
As shown in Appendix~\ref{sec:sgpt_derivation}, this selection problem
corresponds to uncentered PCA of the residual-stream activations:
\(V\) spans the \(d-k\) leading principal directions retained by the
model, while \(U\) spans the \(k\) trailing directions to be removed.
Equivalently, \(U\) solves
\begin{equation}
\label{eq:slice_gpt_obj}
    \min_{\substack{U\in\mathbb{R}^{d\times k}\\U^{\top}U=I_k}}
    \mathcal{L}_{\mathrm{SG}}(U)
    =
    \Tr\!\left(U^{\top}CU\right).
\end{equation}
This objective measures only the activation energy discarded
at the current layer, without accounting for how perturbations along
different directions are amplified or attenuated downstream. We address
this limitation next by deriving an efficient layer-wise criterion that
incorporates downstream sensitivity into the direction-selection
objective.

\begin{figure*}[t]
    \centering

    \begin{subfigure}[t]{0.485\textwidth}
        \centering\includegraphics[width=\linewidth]{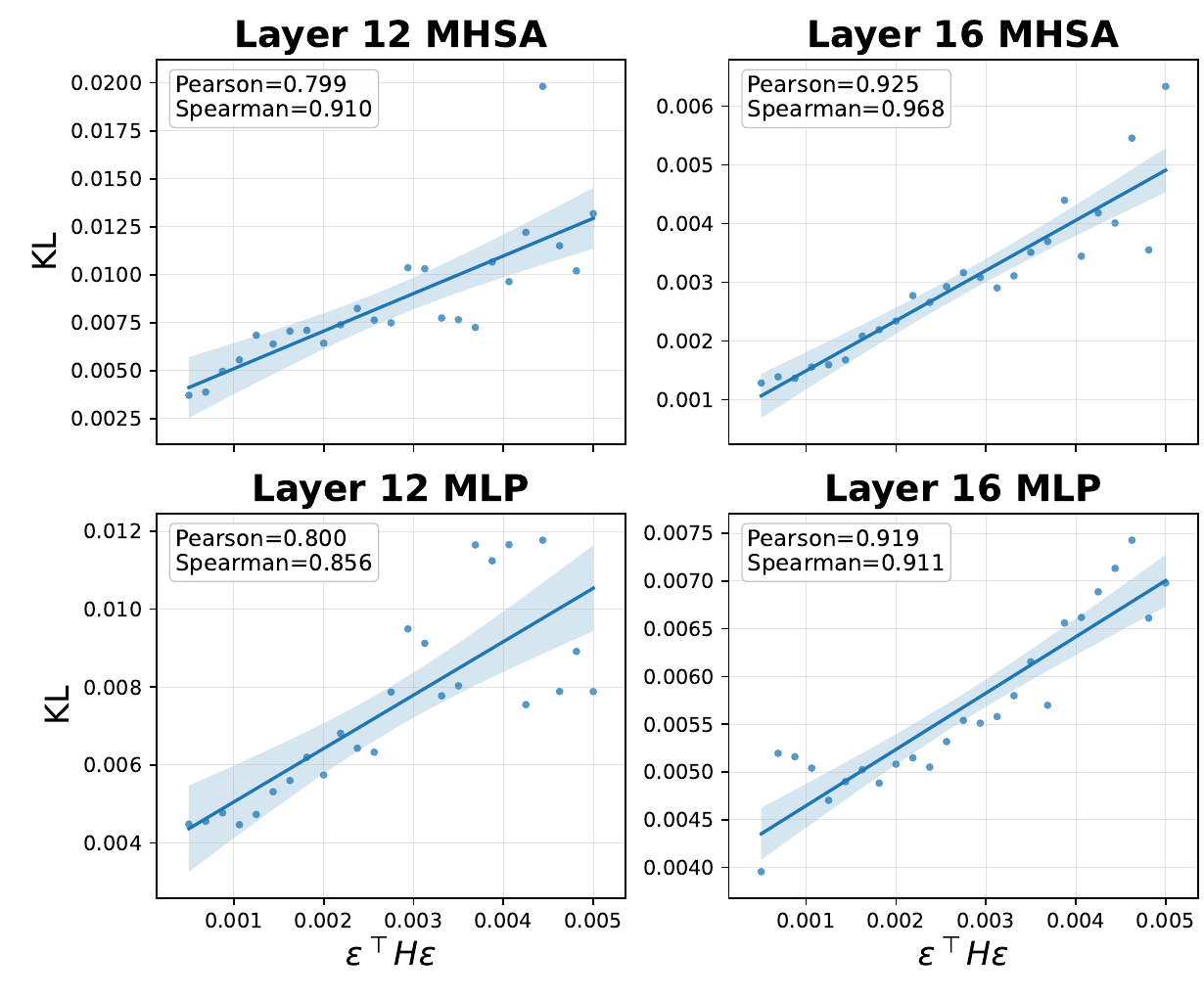}
        \vspace{-2\baselineskip}
        \caption{}
        \label{fig:surrogate_kl}
    \end{subfigure}
    \begin{subfigure}[t]{0.485\textwidth}
        \centering
        \includegraphics[width=\linewidth]{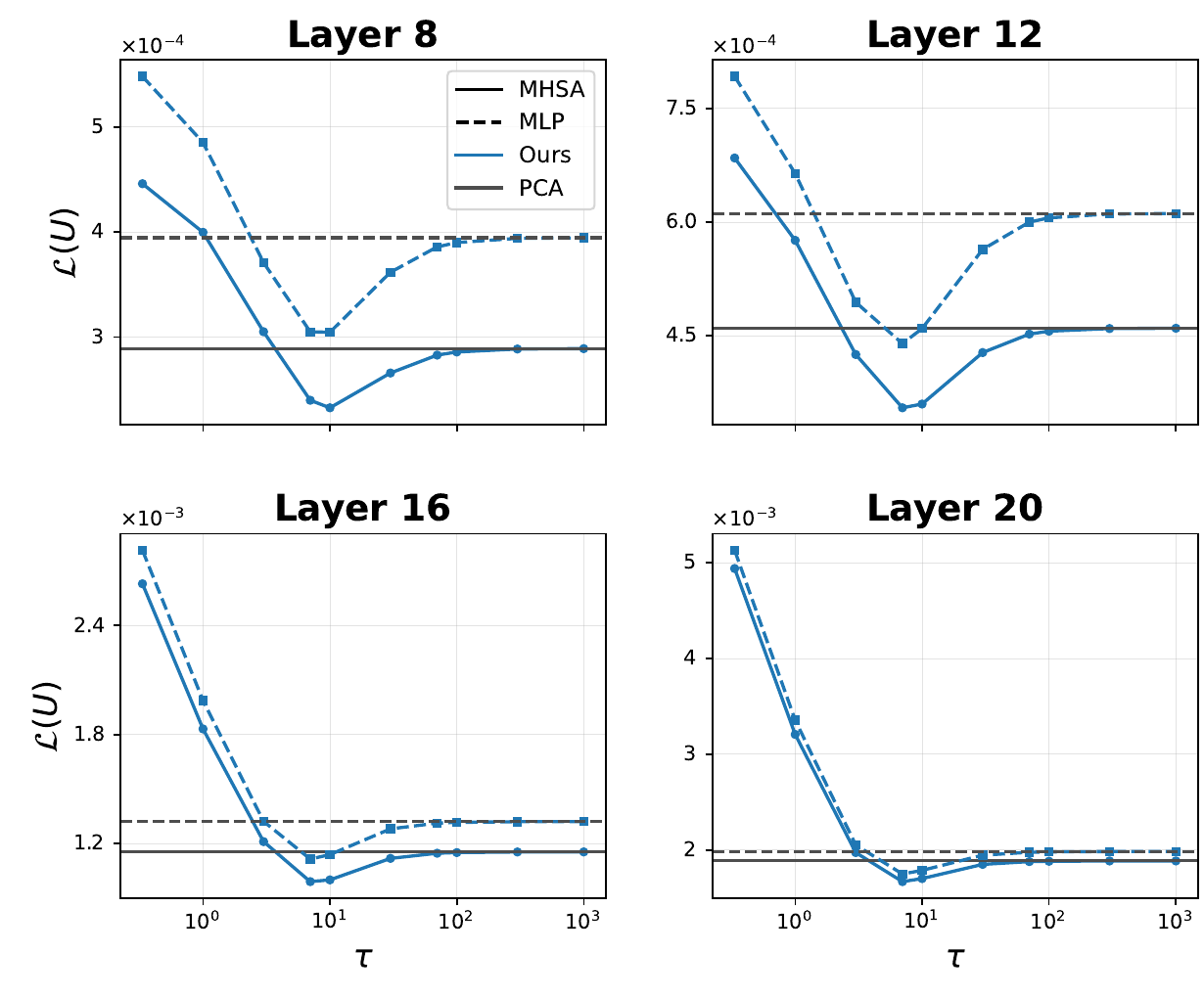}
        \vspace{-2\baselineskip}
        \caption{}
        \label{fig:bound_tau}

    \end{subfigure}
    \caption{
        Validation of the proposed approach on Llama 3.2 3B. (a) measures the correlation between our sensitivity approximation with KL. We fix  activation reconstruction error to the error induced at \(25\%\) pruning rate and vary perturbation direction. (b) shows the value of Equation \ref{eq:objective} attained for bases which minimize Equation \ref{eq:upper_bound} for various values of \(\tau\). Results are shown across different layers at a pruning rate of \(25\%\). We compare with the loss achieved by the SliceGPT solution. 
    }
    \label{fig:surrogate_validation}
\end{figure*}
 
\subsection{Sensitivity-aware direction selection}

SliceGPT selects the removed subspace by minimizing the activation energy
discarded at each layer. We instead select this subspace according to the
effect of the resulting perturbation on the model output. Let
\(\mathcal{U}=\{U^\ell\}_{\ell=1}^{L}\) denote the collection of removed
subspaces across the \(L\) pruned blocks. If
\(\hat{\theta}(\mathcal{U})\) denotes the parameters of the resulting
pruned model, the ideal objective is
\[
    \min_{\mathcal{U}}
    \mathbb{E}_{s\sim\mathcal{D}_{\mathrm{cal}}}
    \left[
        \KL\!\left(
            p_\theta(\cdot\mid s)
            \,\Vert\,
            p_{\hat{\theta}(\mathcal{U})}(\cdot\mid s)
        \right)
    \right],
\]
where \(p_\theta(\cdot\mid s)\) denotes the predictive distribution of
the original model. Jointly optimizing this objective over all pruning
bases requires repeated end-to-end evaluations of the pruned model. We
therefore construct a layerwise surrogate from the local output
sensitivity of the unpruned model.

Fix a layer and suppress its layer index for clarity. Let
\(x_i\in\mathbb{R}^d\) be the residual-stream activation associated with
token \(i\) in a calibration sequence \(s\). For a perturbed activation
\(\hat{x}_i\in\mathbb{R}^d\), let
\(p_\theta(\cdot\mid s;\hat{x}_i)\) denote the predictive distribution
obtained by replacing \(x_i\) with \(\hat{x}_i\) while leaving all other
activations and model parameters unchanged. Define
\[
    D_i(\hat{x}_i;s)
    =
    \KL\!\left(
        p_\theta(\cdot\mid s)
        \,\Vert\,
        p_\theta(\cdot\mid s;\hat{x}_i)
    \right).
\]
Assuming \(D_i\) is twice differentiable at \(\hat{x}_i=x_i\), its local
expansion is
\[
    D_i(\hat{x}_i;s)
    =
    \frac{1}{2}
    (\hat{x}_i-x_i)^\top
    H_i(s)
    (\hat{x}_i-x_i)
    +
    o\!\left(\lVert\hat{x}_i-x_i\rVert_2^2\right),
\]
where \(H_i(s)
    =
    \left.
    \nabla_{\hat{x}_i}^2 D_i(\hat{x}_i;s)
    \right|_{\hat{x}_i=x_i}
    \in\mathbb{R}^{d\times d}\).
The constant and first-order terms vanish because
\(D_i(x_i;s)=0\) is a minimum of the KL divergence. Under the usual regularity conditions, \(H_i(s)\) is the Fisher
information matrix with respect to the intervened activation.

Perturbing all token activations jointly yields a Hessian over the concatenated sequence representation requiring storage of \(|s|^{2}\) \(d \times d\) matrices per sequence which is impractical. We therefore discard the cross-token blocks. Together with our layerwise treatment, this amounts to neglecting both cross-token and cross-layer curvature interactions.

Let \(U\in\mathbb{R}^{d\times k}\), with \(U^\top U=I_k\), span the
subspace removed at the current layer. Orthogonal projection onto the
retained subspace gives
\[
    \hat{x}_i
    =
    \left(I_d-UU^\top\right)x_i,
    \qquad
    x_i-\hat{x}_i
    =
    UU^\top x_i.
\]
Substituting this perturbation into the token-separable quadratic
approximation and omitting the constant factor \(1/2\) gives
\begin{align}
    \mathcal{L}_{\mathrm{local}}(U)
    &=
    \mathbb{E}_{s\sim\mathcal{D}_{\mathrm{cal}}}
    \left[
        \frac{1}{|s|}
        \sum_{i=1}^{|s|}
        (UU^\top x_i)^\top
        H_i(s)
        (UU^\top x_i)
    \right]
    \nonumber\\
    &=
    \mathbb{E}_{s\sim\mathcal{D}_{\mathrm{cal}}}
    \left[
        \frac{1}{|s|}
        \sum_{i=1}^{|s|}
        \Tr\!\left(
            U^\top x_i x_i^\top U\,
            U^\top H_i(s)U
        \right)
    \right].
    \label{eq:tokenwise_curvature_objective}
\end{align}

Using token-specific curvature in a deterministic optimization procedure
would require storing or repeatedly recomputing a \(d\times d\) matrix
for every calibration token. We therefore replace \(H_i(s)\) by the
shared layerwise curvature
\[
    H
    =
    \mathbb{E}_{s\sim\mathcal{D}_{\mathrm{cal}}}
    \left[
        \frac{1}{|s|}
        \sum_{i=1}^{|s|}H_i(s)
    \right].
\]
This approximation removes the token dependence of the curvature and
therefore discards correlations between the activation outer products
\(x_i x_i^\top\) and their corresponding curvature matrices \(H_i(s)\). Substituting the shared
curvature \(H\) into \eqref{eq:tokenwise_curvature_objective} yields our
final direction-selection objective:
\begin{equation}
    \label{eq:objective}
    \min_{\substack{U\in\mathbb{R}^{d\times k}\\U^\top U=I_k}}
    \mathcal{L}(U)
    =
    \Tr\!\left(
        U^\top C U\,
        U^\top H U
    \right),
\end{equation}
The objective requires storing only the two \(d\times d\) layerwise
statistics \(C\) and \(H\), with \(H\) being the only additional matrix
relative to SliceGPT. Additionally, when \(H=I_d\), the orthonormality of \(U\) exactly recovers equation~\ref{eq:slice_gpt_obj} which is the SliceGPT objective. We additionally examine whether \(H\) captures directions to which the model output is particularly sensitive in Figure~\ref{fig:surrogate_kl}. Holding activation reconstruction error fixed and varying only the perturbation direction, we observe that directions assigned larger error by the curvature estimate generally induce larger KL divergence. This suggests that \(H\) captures output sensitivity that cannot be distinguished using reconstruction error alone. 

Optimizing this objective for a general positive
semidefinite \(H\), however, \eqref{eq:objective} does not reduce directly
to a standard eigenspace problem. Although it can be addressed through
iterative optimization on the Stiefel manifold, doing so introduces
substantial computational overhead. In the following section, we derive
a spectral upper bound that yields a tractable approximation using only
a small number of eigendecompositions.

\subsection{Efficient approximation of the objective and its theoretical analysis}

Directly minimizing \eqref{eq:objective} over a multidimensional subspace is nontrivial. We first consider removing a single unit direction $u$, for which the loss becomes
$\mathcal{L}(u)=(u^\top C u)(u^\top H u)$. We show that this problem admits an equivalent spectral formulation, reducing its solution to a one-dimensional search over the smallest eigenvalue of a weighted sum of $C$ and $H$. This exact rank-one characterization motivates our extension to higher-dimensional pruning subspaces. We defer all proofs to Appendix~\ref{sec:derivations}

\begin{restatable}{proposition}{propone}
\label{prop_one}
Let \(C,H\in\R^{d\times d}\) be symmetric positive semidefinite
matrices. Then
\begin{equation}
\label{eq:rankone_equivalence}
    \min_{\substack{u\in\R^d\\ \|u\|_2=1}}
    (u^\top C u)(u^\top H u)
    =
    \frac14
    \inf_{\tau>0}
    \left[
        \lambda_{\min}\!\left(
            \tau C+\tau^{-1}H
        \right)
    \right]^2.
\end{equation}
If \(C,H\succ0\), the infimum is attained at some
\(\tau_\star>0\), and any unit eigenvector associated with the
smallest eigenvalue of \(\tau_\star C+\tau_\star^{-1}H\)
globally minimizes the rank-one pruning objective.
\end{restatable}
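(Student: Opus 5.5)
The whole argument rests on the AM--GM inequality in its variational form: for \(a,b\ge 0\),
\[
\sqrt{ab}=\tfrac12\inf_{\tau>0}\left(\tau a+\tau^{-1}b\right),
\]
and the infimum is attained at \(\tau=\sqrt{b/a}\) whenever \(a,b>0\). Set \(a=u^\top C u\) and \(b=u^\top H u\). Both are nonnegative because \(C,H\succeq 0\). For every unit \(u\) we then have
\[
\sqrt{\mathcal{L}(u)}=\tfrac12\inf_{\tau>0} u^\top(\tau C+\tau^{-1}H)u .
\]
Minimizing over the unit sphere and exchanging the two infima, which is always allowed for infima, gives
\[
\min_{\|u\|=1}\sqrt{\mathcal{L}(u)}=\tfrac12\inf_{\tau>0}\min_{\|u\|=1}u^\top(\tau C+\tau^{-1}H)u=\tfrac12\inf_{\tau>0}\lambda_{\min}(\tau C+\tau^{-1}H)
\]
by the Rayleigh quotient characterization. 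The minimum on the left exists because \(\mathcal{L}\) is continuous on the compact sphere. Since squaring is monotone on \([0,\infty)\), and every \(\lambda_{\min}(\tau C+\tau^{-1}H)\ge 0\), squaring commutes with the infimum and yields \eqref{eq:rankone_equivalence}. The degenerate cases where \(a\) or \(b\) is zero are covered by the same formula: if \(a=0\), sending \(\tau\to\infty\) drives \(\tau a+\tau^{-1}b=\tau^{-1}b\) to \(0=\sqrt{ab}\), and the case \(b=0\) is symmetric. So no separate case split is needed.

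For attainment when \(C,H\succ0\), define \(g(\tau)=\lambda_{\min}(\tau C+\tau^{-1}H)\). This is continuous, since \(\lambda_{\min}\) is continuous in the matrix entries, being concave or Lipschitz in the operator norm. It also satisfies
\[
g(\tau)\ge \tau\lambda_{\min}(C)+\tau^{-1}\lambda_{\min}(H).
\]
This lower bound tends to \(\infty\) both as \(\tau\to0^+\) and as \(\tau\to\infty\), so \(g\) is coercive on \((0,\infty)\). Its infimum is therefore attained on a compact interval \([\tau_1,\tau_2]\subset(0,\infty)\), at some \(\tau_\star\).

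The step needing the most care is showing that \emph{every} unit eigenvector \(v\) for \(\lambda_{\min}(\tau_\star C+\tau_\star^{-1}H)\) is a global minimizer. Write \(m=\min_{\|u\|=1}\sqrt{\mathcal{L}(u)}\). The previous paragraphs give \(g(\tau_\star)=2m\). Applying AM--GM to \(v\) at the particular value \(\tau_\star\),
\[
2\sqrt{\mathcal{L}(v)}\le \tau_\star v^\top Cv+\tau_\star^{-1}v^\top Hv=g(\tau_\star)=2m .
\]
Hence \(\mathcal{L}(v)\le m^2=\min\mathcal{L}\), so \(v\) is a global minimizer. This avoids any differentiability or envelope argument about \(g\) at \(\tau_\star\), which might fail if the eigenvalue is repeated. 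Only the pointwise AM--GM inequality and the optimality value of \(\tau_\star\) are used.
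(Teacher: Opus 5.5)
Your proof is correct and follows essentially the same route as the paper: the variational AM--GM identity, exchanging the infima and applying Rayleigh--Ritz, coercivity via \(\lambda_{\min}(\tau C+\tau^{-1}H)\ge\tau\lambda_{\min}(C)+\tau^{-1}\lambda_{\min}(H)\), and the pointwise AM--GM bound at \(\tau_\star\) to show that every bottom eigenvector is optimal. The only difference is cosmetic: you work with \(\sqrt{\mathcal{L}}\) and square at the end, while the paper squares the scalar identity first, and you additionally spell out the degenerate cases and why squaring commutes with the infimum.
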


Motivated by this exact rank-one characterization, we extend the
spectral construction to removing a $k$-dimensional subspace. For each
$\tau>0$, we form $U$ from the eigenvectors associated with the $k$
smallest eigenvalues of $M_\tau=\tau C+\tau^{-1}H$. We show that this
basis minimizes an upper bound on the loss in \eqref{eq:objective}.
Equivalently, it minimizes a regularized version of $\mathcal{L}(U)$,
with a nonnegative regularization term equal to the gap between the
upper bound and the original loss.

\begin{restatable}{proposition}{proptwo}
\label{prop_two}
Let \(C,H\in\R^{d\times d}\) be symmetric positive semidefinite
matrices, and let \(1\leq k\leq d\). For every \(\tau>0\), define
\(M_\tau=\tau C+\tau^{-1}H\). Then, for any
\(U\in\R^{d\times k}\) satisfying \(U^\top U=I_k\),
\begin{equation}
\label{eq:upper_bound}
    \mathcal{L}(U)
    =
    \Tr\!\left(U^\top CU\,U^\top HU\right)
    \leq
    \underbrace{
        \frac14\Tr\!\left(U^\top M_\tau^2 U\right)
    }_{\mathcal{B}_\tau(U)}.
\end{equation}
For fixed \(\tau\), this upper bound is globally minimized by
choosing the columns of \(U\) as eigenvectors associated with
the \(k\) smallest eigenvalues of \(M_\tau\), yielding
\begin{equation}
\label{eq:spectral_bound_minimum}
    \min_{U^\top U=I_k}\mathcal{B}_\tau(U)
    =
    \frac14\sum_{j=1}^k\lambda_j(M_\tau)^2,
\end{equation}
where the eigenvalues are ordered increasingly.
\end{restatable}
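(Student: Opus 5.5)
The plan is to insert an intermediate quantity between \(\mathcal{L}(U)\) and \(\mathcal{B}_\tau(U)\): first a matrix arithmetic–geometric-mean inequality on the compressed \(k\times k\) matrices, then a compression inequality that relates \((U^\top M_\tau U)^2\) to \(U^\top M_\tau^2 U\). The minimization claim will then follow from Ky Fan's principle.

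\textbf{Step 1 (matrix AM--GM).} Write \(A=U^\top C U\) and \(B=U^\top H U\). Both are symmetric positive semidefinite \(k\times k\) matrices. For \(\tau>0\), expand \(\Tr\bigl((\tau A-\tau^{-1}B)^2\bigr)\ge 0\), using that the trace of the square of a symmetric matrix is nonnegative and that \(\Tr(AB)=\Tr(BA)\). This gives \(\tau^2\Tr(A^2)+\tau^{-2}\Tr(B^2)\ge 2\Tr(AB)\). Adding \(2\Tr(AB)\) to both sides then yields
\[
\Tr\bigl((\tau A+\tau^{-1}B)^2\bigr)\ge 4\Tr(AB)=4\,\mathcal{L}(U).
\]
Since \(\tau A+\tau^{-1}B=U^\top M_\tau U\), this means \(\mathcal{L}(U)\le \tfrac14\Tr\bigl((U^\top M_\tau U)^2\bigr)\).

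\textbf{Step 2 (compression).} Next I show \(\Tr\bigl((U^\top M U)^2\bigr)\le \Tr(U^\top M^2 U)\) for symmetric \(M=M_\tau\). Inserting \(I=UU^\top+(I-UU^\top)\) between the two factors of \(M\) gives
\[
U^\top M^2U-(U^\top MU)^2=U^\top M(I-UU^\top)MU.
\]
The matrix \(I-UU^\top\) is an orthogonal projector, so the right-hand side equals \(\bigl((I-UU^\top)MU\bigr)^\top\bigl((I-UU^\top)MU\bigr)\succeq0\), and its trace is nonnegative. Combining Steps 1 and 2 gives \eqref{eq:upper_bound}.

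\textbf{Step 3 (minimizing the bound).} For fixed \(\tau\), \(\mathcal{B}_\tau(U)=\tfrac14\Tr(U^\top M_\tau^2U)\) is a Rayleigh-trace functional. By Ky Fan's minimum principle, its minimum over \(U^\top U=I_k\) is \(\tfrac14\) times the sum of the \(k\) smallest eigenvalues of \(M_\tau^2\), attained by the corresponding eigenvectors. The one point needing care is that \(M_\tau\succeq0\), being a nonnegative combination of PSD matrices. Squaring is therefore monotone on its spectrum, so \(M_\tau^2\) has the same eigenvectors as \(M_\tau\) with eigenvalues \(\lambda_j(M_\tau)^2\), in the same increasing order. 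Hence the \(k\) smallest eigenvectors of \(M_\tau\) minimize \(\mathcal{B}_\tau\), and the minimum value is \(\tfrac14\sum_{j=1}^k\lambda_j(M_\tau)^2\), which is \eqref{eq:spectral_bound_minimum}.

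I expect the main obstacle to be Step 2. The naive bound \(\tfrac14\Tr\bigl((U^\top M_\tau U)^2\bigr)\) is not itself a Rayleigh-type objective, and the projector identity is what converts it into one. The PSD-ordering remark in Step 3 is the other place where the argument would break without positive semidefiniteness.
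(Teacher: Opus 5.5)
Your proof is correct and follows the paper's argument essentially step for step. Your Step 1 is the same polarization identity the paper writes as $4\Tr(AB)=\|\tau A+\tau^{-1}B\|_F^2-\|\tau A-\tau^{-1}B\|_F^2$. Your projector identity in Step 2 spells out the paper's remark that multiplying by $U^\top$ cannot increase the Frobenius norm, and in fact yields the slightly stronger Loewner inequality $(U^\top M_\tau U)^2\preceq U^\top M_\tau^2 U$. Step 3 is the same Ky Fan/Rayleigh--Ritz argument, with the same use of $M_\tau\succeq 0$ to pass from the eigenvalues of $M_\tau^2$ to $\lambda_j(M_\tau)^2$.
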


We next show that jointly optimizing this upper bound over $U$ and
$\tau$ is equivalent to minimizing the original loss $\mathcal{L}(U)$
augmented by the regularization term
$\|U^\top CU\|_F\|U^\top HU\|_F$, which couples activation energy
and output sensitivity within the removed subspace.

\begin{restatable}{proposition}{propthree}
\label{prop_three}
Let \(C,H\in\R^{d\times d}\) be symmetric positive semidefinite
matrices, and define \(M_\tau=\tau C+\tau^{-1}H\).
For \(1\leq k\leq d\),
\begin{equation}
\label{eq:regularized_equivalence}
\begin{aligned}
    \frac14\inf_{\tau>0}
    \sum_{j=1}^k\lambda_j(M_\tau)^2
    =
    \min_{U^\top U=I_k}
    \Bigl[
        \frac12\mathcal{L}(U)
        +\frac12
        \|U^\top CU\|_F\|U^\top HU\|_F
    \Bigr],
\end{aligned}
\end{equation}
where the eigenvalues are ordered increasingly.
If \(C,H\succ0\), the infimum is attained.

Thus, jointly optimizing the spectral upper bound over the
subspace and the scale \(\tau\) yields the equivalent
regularized problem
\begin{equation}
\label{eq:regularized_objective}
    \underset{U^\top U=I_k}{\arg\min}
    \left[
        \mathcal{L}(U)
        +
        \|U^\top CU\|_F\|U^\top HU\|_F
    \right],
\end{equation}
where the common factor \(1/2\) has been omitted.
\end{restatable}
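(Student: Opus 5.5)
The plan is to swap the order of minimization and reduce everything to a scalar problem in \(\tau\) for each fixed \(U\). By Proposition~\ref{prop_two}, for each \(\tau>0\),
\[
\tfrac14\sum_{j=1}^k\lambda_j(M_\tau)^2=\min_{U^\top U=I_k}\mathcal{B}_\tau(U),
\]
so
\[
\tfrac14\inf_{\tau>0}\sum_{j=1}^k\lambda_j(M_\tau)^2=\min_{U}\inf_{\tau>0}\mathcal{B}_\tau(U).
\]
Exchanging the two infima is always legitimate, and the minimum over the compact Stiefel manifold is attained once the inner function is shown to be continuous in \(U\). It therefore suffices to compute \(\inf_{\tau>0}\mathcal{B}_\tau(U)\) in closed form for a fixed \(U\).

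Expanding \(M_\tau^2=\tau^2C^2+\tau^{-2}H^2+CH+HC\) gives
\[
4\mathcal{B}_\tau(U)=\tau^2a+\tau^{-2}b+2c,
\]
with
\[
a=\Tr(U^\top C^2U),\qquad b=\Tr(U^\top H^2U),\qquad c=\Tr(U^\top CHU).
\]
Minimizing \(\tau^2a+\tau^{-2}b\) over \(\tau>0\) gives \(2\sqrt{ab}\) by AM--GM. This value is attained at \(\tau^4=b/a\) when \(a,b>0\), and is approached as an infimum otherwise. Hence
\[
\inf_{\tau}\mathcal{B}_\tau(U)=\tfrac12\bigl(\sqrt{ab}+c\bigr),
\]
which is continuous in \(U\).

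This does not match the claimed form, since \(\Tr(U^\top C^2U)\neq\|U^\top CU\|_F^2\) in general. The two agree only when the range of \(U\) is \(C\)-invariant, because
\[
\Tr(U^\top C^2U)-\|U^\top CU\|_F^2=\|(I-UU^\top)CU\|_F^2\ge 0.
\]
So I would prove the identity by establishing two inequalities.

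\textbf{Lower bound, the main obstacle.} For every \(U\),
\[
\tfrac12(\sqrt{ab}+c)\ \ge\ \tfrac12\Bigl(\mathcal{L}(U)+\|U^\top CU\|_F\|U^\top HU\|_F\Bigr),
\]
where \(\mathcal{L}(U)=\Tr(U^\top CUU^\top HU)\). Write \(P=UU^\top\) and \(P^\perp=I-P\). Then \(a\ge\|U^\top CU\|_F^2\), \(b\ge\|U^\top HU\|_F^2\), and
\[
c=\mathcal{L}(U)+\Tr(U^\top CP^\perp HU).
\]
The cross term \(\Tr(U^\top CP^\perp HU)\) can be negative, so the argument must absorb it into the surplus of \(\sqrt{ab}\). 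I would set \(X=P^\perp CU\) and \(Y=P^\perp HU\), and write \(A=U^\top CU\), \(B=U^\top HU\). Then
\[
\sqrt{ab}=\sqrt{(\|A\|^2+\|X\|^2)(\|B\|^2+\|Y\|^2)}\ \ge\ \|A\|\|B\|+\|X\|\|Y\|
\]
by Cauchy--Schwarz in \(\R^2\), and \(\|X\|\|Y\|\ge|\langle X,Y\rangle|\ge-\Tr(X^\top Y)\). Together these give the lower bound.

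\textbf{Upper bound.} For the reverse direction, restrict to a minimizer of the right-hand side, or more simply evaluate the left-hand side at \(\tau\) with \(U\) the bottom-\(k\) eigenvectors of \(M_\tau\). There \(M_\tau U=U\Lambda\), and the task is to show that the cross terms vanish, i.e.\ \(\|X\|\|Y\|+\Tr(X^\top Y)=0\) along an optimal pair. Here \(M_\tau U=U\Lambda\) forces \(\tau X=-\tau^{-1}Y\), so \(X\) and \(Y\) are antiparallel and
\[
\|X\|\|Y\|=-\Tr(X^\top Y),
\]
giving exact cancellation. The surplus \(\sqrt{ab}-(\|A\|\|B\|+\|X\|\|Y\|)\) vanishes when \((\|A\|,\|X\|)\) is proportional to \((\|B\|,\|Y\|)\). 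This holds when \(\tau^2\|A\|=\tau^{-2}\|B\|\), which is the stationarity condition in \(\tau\).

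I would thus take \(\tau_\star\) minimizing the spectral expression. This minimizer exists for \(C,H\succ0\) by coercivity, since the expression tends to \(\infty\) as \(\tau\to0\) or \(\tau\to\infty\). I would use the first-order condition, via Hellmann--Feynman on the eigenvalues \(\lambda_j(M_\tau)\), to check this proportionality. Handling degenerate or semidefinite cases by a limiting argument is the delicate part.

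Finally, \eqref{eq:regularized_objective} follows by multiplying the right-hand side by \(2\).
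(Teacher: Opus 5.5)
There is a genuine gap: your argument proves only one of the two inequalities.

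Your lower bound is sound. The Cauchy--Schwarz step together with \(\|X\|_F\|Y\|_F\ge-\Tr(X^\top Y)\) shows \(\inf_{\tau>0}\mathcal{B}_\tau(U)\ge R(U):=\frac12\mathcal{L}(U)+\frac12\|U^\top CU\|_F\|U^\top HU\|_F\) for every \(U\). This is just \(\|M_\tau U\|_F\ge\|U^\top M_\tau U\|_F\), already used in Proposition~\ref{prop_two}. So the left-hand side is at least \(\min_U R(U)\).

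The reverse inequality is where the argument fails. Evaluating at \(\tau_\star\), with \(U_\star\) the bottom-\(k\) eigenvectors of \(M_{\tau_\star}\), and showing that the cross term and the Cauchy--Schwarz surplus vanish gives only \(\frac14\sum_{j\le k}\lambda_j(M_{\tau_\star})^2=R(U_\star)\ge\min_U R(U)\). That is the direction you already had. To prove the ``\(\le\)'' direction you must bound the spectral quantity by \(R(U)\) at an arbitrary \(U\), in particular at a minimizer of \(R\). Such a minimizer is not known a priori to be an eigenspace of any \(M_\tau\), and nothing in your argument controls \(R\) off the eigenvector family.

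Separately, the proportionality/stationarity condition should be \(\tau^2\|U^\top CU\|_F=\|U^\top HU\|_F\), not \(\tau^2\|A\|=\tau^{-2}\|B\|\). When \(\lambda_k(M_\tau)<\lambda_{k+1}(M_\tau)\), Hellmann--Feynman gives \(\frac{d}{d\tau}\sum_{j\le k}\lambda_j(M_\tau)^2=2(\tau\|A\|_F^2-\tau^{-3}\|B\|_F^2)\). Without that spectral gap, or when no \(\tau_\star\) exists in the semidefinite case, this step also breaks down. Repairing it would still not fix the direction.

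The missing ingredient is that, for each fixed \(\tau\), the compressed quantity \(\mathcal{S}_\tau(U)=\frac14\|U^\top M_\tau U\|_F^2\) has the same minimum over \(U\) as \(\mathcal{B}_\tau\). By Cauchy (Poincar\'e) interlacing, \(\lambda_j(U^\top M_\tau U)\ge\lambda_j(M_\tau)\ge0\) for \(j\le k\). Hence, for every orthonormal \(U\),
\[
\tfrac14\sum_{j=1}^k\lambda_j(M_\tau)^2\le\mathcal{S}_\tau(U)=\tfrac14\bigl(\tau^2\|A\|_F^2+\tau^{-2}\|B\|_F^2+2\mathcal{L}(U)\bigr).
\]
Taking \(\inf_{\tau>0}\) of both sides gives left-hand side \(\le R(U)\) for all \(U\).

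This is the paper's proof. It works with \(\mathcal{S}_\tau\) rather than \(\mathcal{B}_\tau\) from the start, so \(\inf_\tau\mathcal{S}_\tau(U)=R(U)\) exactly. A single exchange of infima then yields the identity for all positive semidefinite \(C,H\), with no stationarity analysis or limiting argument. Your correct observation that \(\inf_\tau\mathcal{B}_\tau(U)\ne R(U)\) in general is exactly why this switch is needed.

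Likewise, the claim about \eqref{eq:regularized_objective} concerns minimizers, not just rescaling by \(2\). It needs three things: the value identity, the inequality \(\mathcal{S}_\tau\le\mathcal{B}_\tau\), and the fact that bottom eigenspaces of \(M_\tau\) minimize \(\mathcal{S}_\tau\) and satisfy \(\mathcal{B}_\tau=\mathcal{S}_\tau\) there.
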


These results suggest a tractable eigendecomposition approximation to the original pruning objective. While the rank-one case is solved exactly, the higher-dimensional construction minimizes an upper bound that can be interpreted as a regularized form of the desired loss. This yields an efficient basis construction approach described next.

\paragraph{Candidate selection.}
The preceding analysis motivates a spectral family of candidate
subspaces, which we evaluate using the original objective.
Given a grid
\(\mathcal{T}=\{\tau_j\}_{j=1}^{n}\subset(0,\infty)\), we compute
\[
    U_j=\operatorname{eig}_{\min,k}(M_{\tau_j}),
    \qquad j=1,\ldots,n,
\]
and select \(U_{j^\star}\), where
\[
    j^\star
    \in
    \argmin_{1\leq j\leq n}
    \mathcal{L}(U_j)
    =
    \argmin_{1\leq j\leq n}
    \Tr\!\left(
        U_j^\top C U_j\,
        U_j^\top H U_j
    \right).
\]
This procedure requires \(n\) independent eigendecompositions,
which can be parallelized to reduce wall-clock time at the cost
of additional memory. In our experiments, a grid of \(n=5\) values of \(\tau\) is sufficient to achieve strong performance.

We validate the candidate construction approach in Figure~\ref{fig:bound_tau}. Candidates obtained by minimizing \(\mathcal{B}_\tau(U)\) across different values of \(\tau\) attain substantially different values of the original objective \(\mathcal{L}(U)\), and an appropriate choice of \(\tau\) consistently yields lower \(\mathcal{L}(U)\) than the SliceGPT solution based only on \(C\), supporting the use of minimizing \(\mathcal{B}_\tau(U)\) as an efficient mechanism for generating candidate subspaces.

\section{Experiments}
\label{sec:experiments}

\begin{table*}[t]
\centering
\scriptsize
\setlength{\tabcolsep}{2.5pt} %
\renewcommand{\arraystretch}{1.15}
\caption{
Comparison across model families and compression rates.
We report KL divergence, perplexity (PPL), and average downstream accuracy (Acc.).
}
\label{tab:main_results}
\begin{tabular}{llccc ccc ccc ccc ccc}
\toprule
& &
\multicolumn{6}{c}{\textbf{Llama-3.x Instruct}} &
\multicolumn{6}{c}{\textbf{Mistral Instruct}} &
\multicolumn{3}{c}{\textbf{Phi-3 Instruct}} \\
\cmidrule(lr){3-8}
\cmidrule(lr){9-14}
\cmidrule(lr){15-17}

\textbf{Sparsity}
& \textbf{Method}
& \multicolumn{3}{c}{\textbf{3.2 3B}}
& \multicolumn{3}{c}{\textbf{3.1 8B}}
& \multicolumn{3}{c}{\textbf{7B v0.3}}
& \multicolumn{3}{c}{\textbf{Nemo}}
& \multicolumn{3}{c}{\textbf{Medium}} \\
\cmidrule(lr){3-5}
\cmidrule(lr){6-8}
\cmidrule(lr){9-11}
\cmidrule(lr){12-14}
\cmidrule(lr){15-17}

& &
\(\mathbf{\KL}\) & \textbf{PPL} & \textbf{Acc.} &
\(\mathbf{\KL}\) & \textbf{PPL} & \textbf{Acc.} &
\(\mathbf{\KL}\) & \textbf{PPL} & \textbf{Acc.} &
\(\mathbf{\KL}\) & \textbf{PPL} & \textbf{Acc.} &
\(\mathbf{\KL}\) & \textbf{PPL} & \textbf{Acc.} \\
\midrule
0\%
& N/A
& 0.0 & 11.76 & 60.55
& 0.0 & 7.21 & 68.53
& 0.0 & 5.49 & 69.72
& 0.0 & 6.09 & 70.05
& 0.0 & 4.30 & 72.95 \\
\midrule

\multirow{2}{*}{10\%}
& SliceGPT
& 0.5059 & 19.90 & 58.11
& 0.7507 & 15.63 & 65.78
& 0.1944 & 6.58 & 67.91
& 0.2791 & 8.11 & 67.06
& 0.5192 & 6.38 & 72.84 \\
& Ours
& 0.2017 & 14.30 & 58.53
& 0.4040 & 11.04 & 66.27
& 0.1737 & 6.47 & 68.02
& 0.2170 & 7.62 & 67.76
& 0.4373 & 6.05 & 72.56 \\
\midrule

\multirow{2}{*}{20\%}
& SliceGPT
& 0.9729 & 31.40 & 53.49
& 1.3165 & 27.87 & 61.06
& 0.4883 & 8.75 & 64.25
& 0.5991 & 11.23 & 62.28
& 1.0421 & 10.68 & 68.63 \\
& Ours
& 0.5139 & 19.67 & 54.92
& 0.8820 & 18.04 & 62.13
& 0.4626 & 8.58 & 65.02
& 0.5254 & 10.44 & 62.92
& 0.8021 & 8.43 & 69.54 \\
\midrule

\multirow{2}{*}{25\%}
& SliceGPT
& 1.2568 & 42.54 & 50.57
& 1.6814 & 40.24 & 57.97
& 0.7065 & 10.81 & 61.23
& 0.8002 & 13.73 & 58.92
& 1.2801 & 13.56 & 65.17 \\
& Ours
& 0.7290 & 24.37 & 51.95
& 1.1569 & 23.79 & 59.02
& 0.6711 & 10.52 & 62.52
& 0.6982 & 12.40 & 60.02
& 1.0074 & 10.27 & 67.58 \\
\midrule

\multirow{2}{*}{30\%}
& SliceGPT
& 1.5830 & 57.17 & 47.40
& 2.0086 & 55.90 & 53.99
& 0.9751 & 14.06 & 57.40
& 1.0526 & 17.66 & 54.03
& 1.4912 & 16.74 & 61.78 \\
& Ours
& 0.9435 & 30.14 & 49.43
& 1.4304 & 31.32 & 55.72
& 0.9140 & 13.34 & 59.30
& 0.8931 & 15.04 & 55.80
& 1.2336 & 12.84 & 64.90 \\

\bottomrule
\end{tabular}
\end{table*}

\begin{table*}[t]
\centering
\footnotesize
\setlength{\tabcolsep}{4pt} %
\renewcommand{\arraystretch}{1}
\caption{
Comparison of SliceGPT and our method on mathematical reasoning and instruction following tasks.
For IFEval, we report strict prompt-level (Prompt) and
instruction-level (Inst.) accuracy. For GSM8K, we report strict extraction accuracy for Llama models and flexible extraction for Mistral.
All results are reported as percentages.
}
\label{tab:generation_results}

\begin{tabular}{llccccccccc}
\toprule
& &
\multicolumn{3}{c}{\textbf{Llama-3.2 3B Instruct}} &
\multicolumn{3}{c}{\textbf{Llama-3.1 8B Instruct}} &
\multicolumn{3}{c}{\textbf{Mistral Nemo Instruct}}\\
\cmidrule(lr){3-5} \cmidrule(lr){6-8} \cmidrule(lr){9-11}

\textbf{Sparsity}
& \textbf{Method}
& \multicolumn{2}{c}{\textbf{IFEval}}
& \textbf{GSM8K}
& \multicolumn{2}{c}{\textbf{IFEval}}
& \textbf{GSM8K}
& \multicolumn{2}{c}{\textbf{IFEval}}
& \textbf{GSM8K}\\

\cmidrule(lr){3-4}
\cmidrule(lr){6-7}
\cmidrule(lr){9-10}

& & \textbf{Prompt} & \textbf{Inst.} & \textbf{Strict}
  & \textbf{Prompt} & \textbf{Inst.} & \textbf{Strict}
  & \textbf{Prompt} & \textbf{Inst.} & \textbf{Flexible}\\

\midrule

0\% & N/A & 71.4 & 79.5 & 76.6 & 74.5 & 81.8 & 85.3 & 56.0 & 67.4 & 80.0\\

\midrule

\multirow{2}{*}{10\%}
& SliceGPT & 58.8 & 69.3 & 66.0 & 63.4 & 72.4 & 80.3 & 52.7 & 63.1 & 77.2 \\
& Ours     & \textbf{63.4} & \textbf{73.9} & \textbf{69.7} & \textbf{65.8} & \textbf{75.7} & \textbf{81.4} & \textbf{54.7} & \textbf{65.4} & \textbf{78.5}\\
\midrule

\multirow{2}{*}{20\%}
& SliceGPT & 50.5 & 62.4 & 61.6 & 54.2 & 66.0 & 74.0 & 47.3 & 58.4 & 67.1 \\
& Ours     & \textbf{56.2} & \textbf{67.5} & \textbf{64.8} & \textbf{56.8} & \textbf{67.5} & \textbf{75.1} & \textbf{47.9} & \textbf{60.4} & \textbf{69.1} \\
\midrule

\multirow{2}{*}{25\%}
& SliceGPT & 49.7 & 56.1 & 54.7 & 47.3 & 60.1 & 63.3 & 38.8 & 51.4 & 55.7 \\
& Ours     & \textbf{51.0} & \textbf{61.5} & \textbf{60.1} & \textbf{51.8} & \textbf{61.8} & \textbf{65.8} & \textbf{45.8} & \textbf{57.0}  & \textbf{61.7} \\
\midrule

\multirow{2}{*}{30\%}
& SliceGPT & 37.5 & 51.0 & 47.1 & 42.7 & 52.3 & 52.8 & 30.9 & 44.4 & 42.2 \\
& Ours     & \textbf{44.9} & \textbf{55.0} & \textbf{51.1} & \textbf{44.2} & \textbf{56.2} & \textbf{55.1} & \textbf{37.5} & \textbf{50.0} & \textbf{47.5} \\

\bottomrule
\end{tabular}
\vspace{-1\baselineskip}
\end{table*}

\begin{figure*}[t]
\centering
\includegraphics[width=\linewidth]{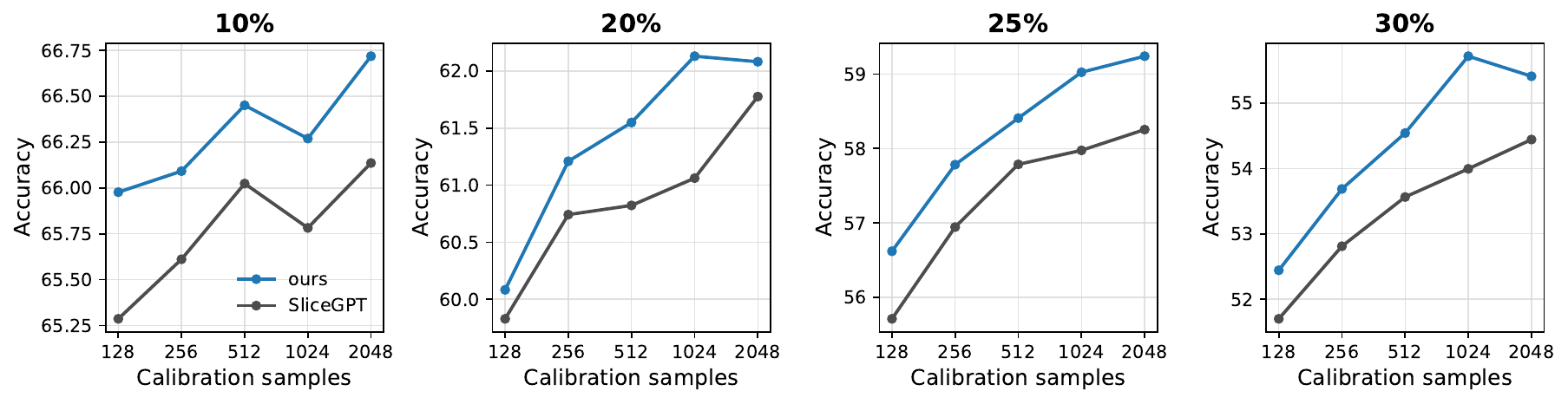}
\vspace{-1\baselineskip}
\caption{
Average common-sense reasoning performance of the proposed method and SliceGPT using varying numbers of calibration samples on Llama-3.1 8B Instruct.
}
\label{fig:calibration_size}
\vspace{-1\baselineskip}
\end{figure*}

We evaluate our method across five instruction-tuned LLMs from three model families. Our primary baseline is SliceGPT~\citep{slicegpt}, the most directly related residual-stream pruning method, with additional comparisons to structured approaches that prune components within the attention and MLP blocks. Implementation details are provided in Appendix~\ref{sec:implementation}.

We evaluate language modeling, common-sense reasoning, and generative performance. For language modeling, we use WikiText-2~\citep{wikitext} and, following \cite{yaqa}, report perplexity (PPL) and KL divergence from the original model. Common-sense reasoning is evaluated on WinoGrande~\citep{winogrande}, MMLU~\citep{mmlu}, ARC-Easy and ARC-Challenge~\citep{arc}, HellaSwag~\citep{hellaswag}, OpenBookQA~\citep{obqa}, and PIQA~\citep{piqa}. Generative evaluation uses GSM8K~\citep{gsm8k} and IFEval~\citep{ifeval}.

For language modeling and common-sense reasoning, we calibrate using 1,024 FineWeb-Edu~\citep{fineweb} sequences of 2,048 tokens. For generative tasks, we instead use 1,024 Tulu3~\citep{tulu3} user--assistant examples with the same maximum sequence length. Because free-form generation may exhibit different output sensitivities than language-modeling data, we compute the sensitivity objective only over assistant-response tokens, so the estimated KL sensitivity reflects perturbations to generated outputs.

\paragraph{Language modeling and common-sense reasoning results} We compare our method with SliceGPT across sparsity ratios of 10\%, 20\%, 25\%, and 30\%, where sparsity denotes the fraction of the residual-stream hidden dimension removed. Table~\ref{tab:main_results} reports language modeling and common-sense reasoning performance across the Llama \citep{llama_3}, Mistral \citep{mistral}, and Phi \citep{phi3} model families. Our method consistently better preserves the behavior of the original model, achieving lower KL divergence and perplexity than SliceGPT across every model and compression rate evaluated. These improvements also generally translate to downstream performance: our method achieves higher average common-sense reasoning accuracy in almost all settings. The benefit of incorporating output sensitivity becomes particularly pronounced as the compression rate increases. For example, at 30\% sparsity, the perplexity of Llama-3.2-3B is reduced from 57.17 with SliceGPT to 30.14 with our method, while average accuracy improves from 47.40 to 49.43. Improvements are also observed across both Mistral models and Phi-3 Medium, demonstrating that the proposed criterion generalizes across different model families.

\paragraph{Mathematical Reasoning and Instruction-Following.}
We additionally compare our method with SliceGPT on mathematical reasoning and instruction-following across multiple sparsity levels. Results for the instruction-tuned Llama-3.2-3B, Llama-3.1-8B, and Mistral-Nemo models are reported in Table~\ref{tab:generation_results}. Across all settings, our method better preserves both mathematical reasoning and instruction-following performance, with the gap generally increasing at higher sparsity. These results provide further evidence that preserving the model's output distribution helps retain capabilities beyond next-token prediction, including mathematical reasoning and instruction following.

\paragraph{Effect of calibration set size} Post-training pruning performance depends on the calibration data used. In our main experiments, we use 1,024 calibration sequences for all methods. To evaluate sensitivity to calibration set size, we vary the number of FineWeb-Edu sequences and measure average common-sense reasoning accuracy across multiple sparsity rates. As shown in Figure~\ref{fig:calibration_size}, our method consistently outperforms SliceGPT across all settings. Both methods generally benefit from additional data, with the largest gains occurring between 128 and 512--1,024 sequences. Beyond 1,024 sequences, improvements are small and occasionally non-monotonic, suggesting that 1,024 samples are sufficient for stable pruning decisions in this setting.

\begin{table}[htbp]
\centering
\centering
\small
\setlength{\tabcolsep}{2.5pt}
\renewcommand{\arraystretch}{1}
\caption{
Effect of additional optimization of \(\mathcal{L}(U)\) on Llama-3.1-8B-Instruct.
\(\KL^{\mathrm{cal}}\) and \(\KL^{\mathrm{wiki}}\) denote KL divergence on the
calibration data and WikiText-2 test set, respectively.
}
\label{tab:opt_gap_results}
\begin{tabular}{cccccc}
\toprule
\textbf{Sparsity}
&
\textbf{Add. Opt.}
&
\(\mathbf{\KL^{\text{cal}}}\)
&
\(\mathbf{\KL^{\text{wiki}}}\)
&
\textbf{PPL}
&
\textbf{Avg. Acc.}

\\
\midrule
\multirow{2}{*}{10\%} & $\times$ & 0.1369 & 0.4040 & 11.04 & 66.27 \\
& $\checkmark$ & 0.1371 & 0.4038 & 11.11 & 66.37 \\
\midrule
\multirow{2}{*}{20\%} & $\times$ & 0.3289 & 0.8820 & 18.04 & 62.13 \\
& $\checkmark$ & 0.3252 & 0.9178 & 18.75 & 62.73 \\
\midrule
\multirow{2}{*}{25\%} & $\times$ & 0.4611 & 1.1569 & 23.79 & 59.02 \\
& $\checkmark$ & 0.4520 & 1.1856 & 24.57 & 59.44 \\
\midrule
\multirow{2}{*}{30\%} & $\times$ & 0.6206 & 1.4304 & 31.32 & 55.72 \\
& $\checkmark$ & 0.6059 & 1.4487 & 31.91 & 55.99 \\
\bottomrule
\end{tabular}
\end{table}

\paragraph{Effect of additional optimization.}
The selected subspace to remove comes from minimizing an upper bound of the desired objective rather than Equation~\ref{eq:objective} itself. We thus investigate whether further optimizing the selected subspace improves downstream performance. Table~\ref{tab:opt_gap_results} shows the results of this experiment using 10,000 optimization steps per pruning site. As can be seen, additional optimization yields only small reductions in calibration KL and modest accuracy improvements. This suggests that the spectral candidate selection already produces a strong solution and that further minimizing the calibration objective provides limited benefit.

\begin{table}[htbp]
\centering
    \centering
    \small
    \setlength{\tabcolsep}{3pt}
    \renewcommand{\arraystretch}{1}

\caption{
    Calibration runtime (HH:MM) comparison on a single NVIDIA H100 GPU.
    Direction selection includes layerwise covariance accumulation
    and pruning-basis construction.
}
    \label{tab:runtime}

    \begin{tabular}{llccc}
        \toprule
        \textbf{Model}
        & \textbf{Method}
        & \makecell{\textbf{Sensitivity}\\\textbf{Estimation}}
        & \makecell{\textbf{Direction}\\\textbf{Selection}}
        & \textbf{Total} \\
        \midrule

        \multirow{2}{*}{\makecell{Llama-3.1\\8B}}
        & SliceGPT
        & --
        & 01:03
        & 01:03 \\
        & Ours
        & 00:09
        & 01:05
        & 01:14 \\
        \midrule

        \multirow{2}{*}{\makecell{Phi-3\\Medium}}
        & SliceGPT
        & --
        & 01:57
        & 01:57 \\
        & Ours
        & 00:18 
        & 02:01
        & 02:19 \\

        \bottomrule
    \end{tabular}
\end{table}

\paragraph{Calibration time comparison.}
Our method adds two sources of overhead relative to SliceGPT: sensitivity estimation and additional eigendecompositions during pruning-direction selection. Table~\ref{tab:runtime} reports calibration runtimes. For both methods, runtime is dominated by direction selection, largely due to double-precision activation covariance accumulation. On Llama-3.1-8B, sensitivity estimation takes 9 minutes, accounting for only \(12\%\) of our total calibration time, while the additional eigendecompositions add comparatively little overhead. Overall, calibration time increases only modestly with the incorporation of output sensitivity.

\begin{table}[htbp]
\centering
    \centering
    \small
    \setlength{\tabcolsep}{3pt}
    \renewcommand{\arraystretch}{1}
    \caption{
        Average common-sense reasoning accuracy of additional structured pruning methods on Llama-3.1-8B-Instruct across pruning rates.
    }
    \label{tab:additional_results}

    \begin{tabular}{lcccc}
        \toprule
        \textbf{Method}
        & \textbf{10\%}
        & \textbf{20\%}
        & \textbf{25\%}
        & \textbf{30\%} \\
        \midrule

        LLM-Pruner       & 62.91 & 53.97 & 45.08 & 42.98 \\
        OSSCAR       & 64.51 & 58.74 & 55.16 & 46.27 \\
        Wanda-sp       & 65.96 & 59.58 & 52.91 & 49.78 \\
        FLAP       & 63.59 & 58.14 & 53.72 & 51.21 \\
        SliceGPT       & 65.78 & 61.06 & 57.97 & 53.99 \\
        Ours  & 66.27 & 62.13 & 59.02 & 55.72 \\

        \bottomrule
    \end{tabular}
\end{table}

\paragraph{Comparison with additional structured pruning methods.}
We additionally compare residual-stream pruning with structured pruning methods that remove parameters within individual transformer blocks. As shown in Table~\ref{tab:additional_results}, residual-stream pruning remains competitive across the evaluated sparsity rates. However, this comparison is not parameter-equivalent, as the reported sparsity rates exclude the additional parameters introduced by the residual-stream rotations. We therefore view these results primarily as context for the relative behavior of the two pruning paradigms rather than a direct comparison. Reducing this overhead through structured rotations or parameter sharing across blocks represents a promising direction for future work.

\section{Conclusion}

We introduced a sensitivity-aware approach to residual-stream pruning that explicitly accounts for the effect of removed activation directions on the model output distribution. Our formulation combines activation covariance with output sensitivity and admits an efficient approximation utilizing only a few eigendcompositions. Across multiple instruction-tuned language models and evaluation settings, the resulting bases consistently preserve model quality more effectively than covariance-only residual-stream pruning, particularly as sparsity increases. By incorporating output sensitivity directly into the basis construction, our method provides a principled way to identify low-dimensional residual subspaces that better preserve model behavior. This suggests several directions for future work, including shared or more parameter-efficient transformations.

\section*{Acknowledgements}
This work was supported in part by the National Center on Generative AI for Uplifting STEM+C Education (GENIUS Center) and by the United States Air Force under Contract No. FA8750-23-C-0518. The authors also gratefully acknowledge the computing resources provided by the NVIDIA Academic Grant Program.

\bibliography{main}
\bibliographystyle{main}

\appendix
\section{Derivations}
\label{sec:derivations}

\subsection{SliceGPT equivalence to equation 1}
\label{sec:sgpt_derivation}

Let \(U \in \mathbb{R}^{d\times k}\) denote an orthonormal basis for the
removed subspace, with \(U^\top U = I_k\). Since the retained and removed
subspaces are orthogonal complements, the reconstruction error induced by
pruning is
\[
x_i - VV^\top x_i = UU^\top x_i.
\]
SliceGPT minimizes the expected squared reconstruction error over calibration
activations:
\begin{align}
\mathcal{L}_{\mathrm{SG}}(U)
&=
\E_{s\sim\mathcal{D}_{\mathrm{cal}}}
\left[
\frac{1}{|s|}
\sum_{i=1}^{|s|}
\left\|UU^\top x_i\right\|_2^2
\right]
\\
&=
\E_{s\sim\mathcal{D}_{\mathrm{cal}}}
\left[
\frac{1}{|s|}
\sum_{i=1}^{|s|}
x_i^\top UU^\top x_i
\right]
\\
&=
\E_{s\sim\mathcal{D}_{\mathrm{cal}}}
\left[
\frac{1}{|s|}
\sum_{i=1}^{|s|}
\Tr\!\left(
U^\top x_i x_i^\top U
\right)
\right].
\end{align}
Defining the uncentered activation covariance
\[
C =
\E_{s\sim\mathcal{D}_{\mathrm{cal}}}
\left[
\frac{1}{|s|}
\sum_{i=1}^{|s|}
x_i x_i^\top
\right],
\]
we obtain
\[
\mathcal{L}_{\mathrm{SG}}(U)
=
\Tr\!\left(U^\top C U\right).
\]

By the Ky Fan minimum principle,
\[
\min_{U^\top U=I_k}\Tr(U^\top C U)
\]
is attained when the columns of \(U\) span the eigenspace associated with
the \(k\) smallest eigenvalues of \(C\). Equivalently, the retained
subspace is spanned by the \(d-k\) leading eigendirections of \(C\), which
is precisely the uncentered PCA solution.

\subsection{Derivation of Proposition 1}
\label{sec:prop_one_derivation}
\propone*
\begin{proof}
For any \(a,b\geq0\),
\[
    ab=\frac14\inf_{\tau>0}(\tau a+\tau^{-1}b)^2.
\]
Applying this identity with \(a=u^\top Cu\) and \(b=u^\top Hu\),
and interchanging the infima, gives
\begin{align*}
    \min_{\|u\|_2=1}(u^\top Cu)(u^\top Hu)
    &=
    \frac14\inf_{\tau>0}
    \min_{\|u\|_2=1}
    \bigl[u^\top(\tau C+\tau^{-1}H)u\bigr]^2\\
    &=
    \frac14\inf_{\tau>0}
    \bigl[\lambda_{\min}(\tau C+\tau^{-1}H)\bigr]^2,
\end{align*}
where the last equality follows from Rayleigh--Ritz and
\(\tau C+\tau^{-1}H\succeq0\).

If \(C,H\succ0\), the spectral objective is continuous and
diverges as \(\tau\to0\) or \(\tau\to\infty\), since
\[
    \lambda_{\min}(\tau C+\tau^{-1}H)
    \geq
    \tau\lambda_{\min}(C)+\tau^{-1}\lambda_{\min}(H).
\]
It therefore attains its minimum at some \(\tau_\star>0\).
For any corresponding unit bottom eigenvector \(u_\star\),
the scalar inequality above bounds its loss by the global minimum
in \eqref{eq:rankone_equivalence}; hence \(u_\star\) is globally optimal.
\end{proof}

\subsection{Derivation of Proposition 2}
\proptwo*

\begin{proof}
Set \(A=U^\top CU\) and \(D=U^\top HU\). Since \(A,D\) are
symmetric,
\begin{align*}
    4\mathcal{L}(U)
    &=
    \|\tau A+\tau^{-1}D\|_F^2
    -
    \|\tau A-\tau^{-1}D\|_F^2\\
    &\leq
    \|U^\top M_\tau U\|_F^2 \leq
    \|M_\tau U\|_F^2=
    \Tr(U^\top M_\tau^2U).
\end{align*}
The second inequality follows from \(U^\top U=I_k\), since
multiplication by \(U^\top\) cannot increase the Frobenius norm.
Dividing by four proves \eqref{eq:upper_bound}.

By the Rayleigh--Ritz theorem,
\[
    \min_{U^\top U=I_k}\mathcal{B}_\tau(U)
    =
    \frac14\sum_{j=1}^k\lambda_j(M_\tau^2),
\]
with a minimizer formed from the corresponding eigenvectors.
Since \(M_\tau\succeq0\), squaring preserves its eigenvectors
and eigenvalue ordering, so
\(\lambda_j(M_\tau^2)=\lambda_j(M_\tau)^2\).
This proves the stated spectral characterization.
\end{proof}

\subsection{Derivation of Proposition 3}
\propthree*
\begin{proof}
For \(U^\top U=I_k\), set \(A=U^\top CU\), \(D=U^\top HU\), and
define
\[
    \mathcal{S}_\tau(U)
    =
    \frac14\|U^\top M_\tau U\|_F^2.
\]
By eigenvalue interlacing,
\(\lambda_j(U^\top M_\tau U)\geq\lambda_j(M_\tau)\geq0\).
Consequently,
\[
    \min_{U^\top U=I_k}\mathcal{S}_\tau(U)
    =
    \frac14\sum_{j=1}^k\lambda_j(M_\tau)^2,
\]
with equality attained by a bottom eigenspace of \(M_\tau\).
By Proposition~\ref{prop_two}, this is also the minimum of
\(\mathcal{B}_\tau(U)\). For fixed \(U\), expanding the squared norm gives
\[
    \mathcal{S}_\tau(U)
    =
    \frac14\left(
        \tau^2\|A\|_F^2
        +\tau^{-2}\|D\|_F^2
        +2\mathcal{L}(U)
    \right).
\]
The scalar identity
\(\inf_{t>0}(ta+t^{-1}b)=2\sqrt{ab}\), for \(a,b\geq0\),
therefore yields
\[
    \inf_{\tau>0}\mathcal{S}_\tau(U)
    =
    \frac12\mathcal{L}(U)
    +
    \frac12\|A\|_F\|D\|_F.
\]
Exchanging the infima establishes
\begin{align*}
    \frac14\inf_{\tau>0}
    \sum_{j=1}^k\lambda_j(M_\tau)^2
    &=
    \inf_{\tau>0}\min_{U^\top U=I_k}\mathcal{S}_\tau(U)\\
    &=
    \min_{U^\top U=I_k}
    \left[
        \frac12\mathcal{L}(U)
        +\frac12\|U^\top CU\|_F\|U^\top HU\|_F
    \right].
\end{align*}
The final minimum exists because its objective is continuous
on the compact feasible set.

If \(C,H\succ0\), the spectral objective is continuous and
diverges at both endpoints \(\tau\to0\) and \(\tau\to\infty\),
so its infimum is attained. Moreover, for any minimizer \(U\)
of the regularized objective, choosing
\[
    \tau^2=\frac{\|U^\top HU\|_F}{\|U^\top CU\|_F}
\]
gives a joint minimizer of \(\mathcal{S}_\tau(U)\).
Such a subspace is a bottom eigenspace of \(M_\tau\), where
\(\mathcal{B}_\tau(U)=\mathcal{S}_\tau(U)\).
Conversely, any joint minimizer of \(\mathcal{B}_\tau(U)\)
minimizes the regularized objective, since
\(\mathcal{S}_\tau(U)\leq\mathcal{B}_\tau(U)\) and their joint
optimal values agree.
Finally, removing the common positive factor \(1/2\) leaves
the minimizing subspaces unchanged.
\end{proof}

\section{Orthogonal Reparameterization Invariance of LLMs}
\label{sec:orthog_inv}

We briefly review the computational invariance described by
\citet{slicegpt}, which arises from an orthogonal equivariance of the
hidden representations under a corresponding reparameterization of the
network weights. Let \(x\in\mathbb{R}^{d}\) denote a hidden
representation and consider a transformer block
\[
    F(x)
    =
    W_{\mathrm{out}}\,
    \sigma\!\left(W_{\mathrm{in}}x\right),
\]
where \(W_{\mathrm{in}}\in\mathbb{R}^{m\times d}\) and
\(W_{\mathrm{out}}\in\mathbb{R}^{d\times m}\) are the input and output
projections, respectively, and
\(\sigma:\mathbb{R}^{m}\to\mathbb{R}^{m}\) denotes the intervening
nonlinear operation. Let \(Q\in\mathbb{R}^{d\times d}\) be orthogonal,
so that \(Q^{\top}Q=QQ^{\top}=I_d\), and define
\[
    \widetilde{x}=Qx,\qquad
    \widetilde{W}_{\mathrm{in}}=W_{\mathrm{in}}Q^{\top},
    \qquad
    \widetilde{W}_{\mathrm{out}}=QW_{\mathrm{out}}.
\]
The transformed block then satisfies
\[
    \widetilde{W}_{\mathrm{out}}\,
    \sigma\!\left(
        \widetilde{W}_{\mathrm{in}}\widetilde{x}
    \right)
    =
    QW_{\mathrm{out}}\,
    \sigma\!\left(W_{\mathrm{in}}x\right)
    =
    QF(x).
\]
Thus, \(Q\) need not commute with \(\sigma\): it is canceled before the
nonlinearity and restored by the output projection. The residual update
is preserved in the rotated basis because
\[
    \widetilde{x}+\widetilde{F}(\widetilde{x})
    =
    Qx+QF(x)
    =
    Q\bigl(x+F(x)\bigr).
\]
Finally, the unscaled RMSNorm operator
\[
    \mathcal{N}(x)
    =
    \frac{x}{
        \sqrt{d^{-1}\lVert x\rVert_2^2+\varepsilon}
    },
    \qquad \varepsilon\geq 0,
\]
is orthogonally equivariant:
\[
    \mathcal{N}(Qx)=Q\mathcal{N}(x).
\]
Moreover, the learned coordinate-wise scale parameters of RMSNorm can be
absorbed into the input projection \(W_{\mathrm{in}}\) of the subsequent
block. LayerNorm, in contrast, is not equivariant under arbitrary
orthogonal transformations because its mean-centering operation
distinguishes the all-ones direction. SliceGPT addresses this by
rewriting LayerNorm-connected transformers in an equivalent
RMSNorm-connected form, absorbing the centering and affine operations
into adjacent linear maps. The change of basis can then be propagated
across transformer blocks without requiring the intervening nonlinear
operations to be orthogonally equivariant.

\section{Implementation details}
\label{sec:implementation}

We follow SliceGPT and construct pruning bases sequentially. After pruning each site, activations are propagated through the partially pruned model before computing the basis for the next site. For our method, we search over \(\tau \in \{1,7,10,30,70\}\). We estimate the sensitivity matrices \(H\) using one token sampled from the model output distribution per token and accumulate these estimates in single precision. Computing an independent backward pass for every sampled class would be prohibitively expensive; instead, we average the corresponding sampled log-probabilities and obtain the corresponding gradient using a single backward pass.

Following SliceGPT, we accumulate the activation covariance matrices \(C\) in double precision, and all eigendecompositions are likewise performed in double precision. For our method, we frobenius normalize both \(C\) and \(H\) prior to performing the sweep over \(\tau\). 

We evaluate common-sense reasoning and mathematical reasoning/instruction following using the Language Model Evaluation Harness \citep{lmeval}. Common-sense reasoning tasks are evaluated zero-shot and normalized accuracy is used for all tasks when available. GSM8K is evaluated 8-shot with chain-of-thought \citep{chain_of_thought} prompting, and IFEval zero-shot. Both use each model's chat template. For GSM8K, we report flexible extraction results for Mistral Nemo as strict extraction mistakenly labels many generations as incorrect for the uncompressed model.

\section{Per-task common-sense reasoning results}
\label{sec:per_task_results}

In this section, we report the complete common-sense reasoning results used to produce the average accuracies shown in Table~\ref{tab:main_results}. Tables~\ref{tab:per_task_llama32_3b}--\ref{tab:per_task_phi3_medium} show these results per model.

\begin{table}[H]
\centering
\small
\setlength{\tabcolsep}{4pt}
\renewcommand{\arraystretch}{1.15}
\caption{
Per-task common-sense reasoning accuracy on Llama-3.2 3B Instruct.
}
\label{tab:per_task_llama32_3b}
\begin{tabular}{llcccccccc}
\toprule
\textbf{Sparsity} & \textbf{Method} & \textbf{WG} & \textbf{ARC-E} & \textbf{ARC-C} & \textbf{HS} & \textbf{OBQA} & \textbf{PIQA} & \textbf{MMLU} & \textbf{Avg.} \\
\midrule
0\% & N/A & 67.32 & 67.85 & 46.16 & 70.46 & 36.00 & 75.52 & 60.53 & 60.55 \\
\midrule
\multirow{2}{*}{10\%}
& SliceGPT & 65.59 & 68.90 & 42.06 & 65.49 & 37.00 & 73.67 & 54.09 & 58.11 \\
& Ours & 67.17 & 67.98 & 42.58 & 65.54 & 37.40 & 73.61 & 55.42 & 58.53 \\
\midrule
\multirow{2}{*}{20\%}
& SliceGPT & 64.40 & 63.89 & 37.54 & 56.64 & 35.40 & 70.13 & 46.45 & 53.49 \\
& Ours & 65.27 & 64.39 & 40.10 & 58.61 & 35.60 & 70.89 & 49.55 & 54.92 \\
\midrule
\multirow{2}{*}{25\%}
& SliceGPT & 62.43 & 59.01 & 34.56 & 51.95 & 34.40 & 68.17 & 43.46 & 50.57 \\
& Ours & 63.22 & 61.49 & 37.20 & 53.86 & 33.20 & 68.61 & 46.08 & 51.95 \\
\midrule
\multirow{2}{*}{30\%}
& SliceGPT & 60.54 & 55.09 & 32.94 & 47.67 & 33.20 & 64.69 & 37.69 & 47.40 \\
& Ours & 61.80 & 58.46 & 33.45 & 49.35 & 32.60 & 66.81 & 43.56 & 49.43 \\
\bottomrule
\end{tabular}
\end{table}

\begin{table}[H]
\centering
\small
\setlength{\tabcolsep}{4pt}
\renewcommand{\arraystretch}{1.15}
\caption{
Per-task common-sense reasoning accuracy on Llama-3.1 8B Instruct.
}
\label{tab:per_task_llama31_8b}
\begin{tabular}{llcccccccc}
\toprule
\textbf{Sparsity} & \textbf{Method} & \textbf{WG} & \textbf{ARC-E} & \textbf{ARC-C} & \textbf{HS} & \textbf{OBQA} & \textbf{PIQA} & \textbf{MMLU} & \textbf{Avg.} \\
\midrule
0\% & N/A & 73.88 & 79.59 & 54.95 & 79.25 & 43.00 & 80.96 & 68.10 & 68.53 \\
\midrule
\multirow{2}{*}{10\%}
& SliceGPT & 73.16 & 77.90 & 51.96 & 74.66 & 42.60 & 78.67 & 61.52 & 65.78 \\
& Ours & 72.77 & 78.32 & 52.82 & 75.27 & 42.00 & 79.00 & 63.70 & 66.27 \\
\midrule
\multirow{2}{*}{20\%}
& SliceGPT & 68.75 & 73.48 & 46.16 & 66.71 & 41.80 & 74.37 & 56.16 & 61.06 \\
& Ours & 70.56 & 75.51 & 46.93 & 67.15 & 41.80 & 74.97 & 58.00 & 62.13 \\
\midrule
\multirow{2}{*}{25\%}
& SliceGPT & 67.01 & 69.44 & 42.49 & 61.60 & 39.60 & 71.93 & 53.75 & 57.97 \\
& Ours & 67.40 & 71.93 & 44.88 & 62.16 & 39.40 & 73.04 & 54.33 & 59.02 \\
\midrule
\multirow{2}{*}{30\%}
& SliceGPT & 64.33 & 63.93 & 37.88 & 55.08 & 38.40 & 69.04 & 49.30 & 53.99 \\
& Ours & 63.77 & 68.73 & 41.89 & 56.53 & 37.60 & 71.00 & 50.49 & 55.72 \\
\bottomrule
\end{tabular}
\end{table}

\begin{table}[H]
\centering
\small
\setlength{\tabcolsep}{4pt}
\renewcommand{\arraystretch}{1.15}
\caption{
Per-task common-sense reasoning accuracy on Mistral 7B v0.3 Instruct.
}
\label{tab:per_task_mistral_7b}
\begin{tabular}{llcccccccc}
\toprule
\textbf{Sparsity} & \textbf{Method} & \textbf{WG} & \textbf{ARC-E} & \textbf{ARC-C} & \textbf{HS} & \textbf{OBQA} & \textbf{PIQA} & \textbf{MMLU} & \textbf{Avg.} \\
\midrule
0\% & N/A & 74.11 & 82.66 & 58.87 & 82.90 & 47.20 & 82.64 & 59.69 & 69.72 \\
\midrule
\multirow{2}{*}{10\%}
& SliceGPT & 74.43 & 81.31 & 55.80 & 78.93 & 46.40 & 80.36 & 58.17 & 67.91 \\
& Ours & 74.51 & 81.23 & 56.23 & 78.80 & 46.00 & 81.18 & 58.18 & 68.02 \\
\midrule
\multirow{2}{*}{20\%}
& SliceGPT & 71.82 & 78.37 & 51.88 & 71.26 & 45.20 & 76.17 & 55.05 & 64.25 \\
& Ours & 72.69 & 80.98 & 52.65 & 71.32 & 46.00 & 76.39 & 55.11 & 65.02 \\
\midrule
\multirow{2}{*}{25\%}
& SliceGPT & 69.46 & 76.18 & 48.98 & 65.63 & 43.40 & 74.10 & 50.87 & 61.23 \\
& Ours & 69.69 & 78.70 & 51.11 & 65.90 & 45.40 & 74.43 & 52.41 & 62.52 \\
\midrule
\multirow{2}{*}{30\%}
& SliceGPT & 67.32 & 72.81 & 45.14 & 59.16 & 40.40 & 70.08 & 46.90 & 57.40 \\
& Ours & 67.09 & 75.80 & 49.66 & 59.87 & 43.00 & 71.27 & 48.42 & 59.30 \\
\bottomrule
\end{tabular}
\end{table}

\begin{table}[H]
\centering
\small
\setlength{\tabcolsep}{4pt}
\renewcommand{\arraystretch}{1.15}
\caption{
Per-task common-sense reasoning accuracy on Mistral Nemo Instruct.
}
\label{tab:per_task_mistral_nemo}
\begin{tabular}{llcccccccc}
\toprule
\textbf{Sparsity} & \textbf{Method} & \textbf{WG} & \textbf{ARC-E} & \textbf{ARC-C} & \textbf{HS} & \textbf{OBQA} & \textbf{PIQA} & \textbf{MMLU} & \textbf{Avg.} \\
\midrule
0\% & N/A & 74.90 & 79.97 & 58.87 & 82.42 & 46.40 & 82.21 & 65.60 & 70.05 \\
\midrule
\multirow{2}{*}{10\%}
& SliceGPT & 71.03 & 77.19 & 56.83 & 77.08 & 43.80 & 80.69 & 62.78 & 67.06 \\
& Ours & 74.19 & 77.02 & 56.66 & 77.01 & 45.40 & 80.85 & 63.22 & 67.76 \\
\midrule
\multirow{2}{*}{20\%}
& SliceGPT & 69.61 & 74.49 & 50.68 & 67.37 & 38.60 & 76.17 & 59.04 & 62.28 \\
& Ours & 70.72 & 75.21 & 51.37 & 67.50 & 40.80 & 75.79 & 59.04 & 62.92 \\
\midrule
\multirow{2}{*}{25\%}
& SliceGPT & 66.89 & 71.21 & 46.25 & 61.65 & 37.60 & 73.01 & 55.82 & 58.92 \\
& Ours & 68.03 & 73.48 & 47.87 & 61.32 & 39.60 & 72.96 & 56.90 & 60.02 \\
\midrule
\multirow{2}{*}{30\%}
& SliceGPT & 63.54 & 65.24 & 40.19 & 53.94 & 36.20 & 68.72 & 50.38 & 54.03 \\
& Ours & 64.40 & 69.32 & 43.52 & 54.78 & 37.20 & 69.86 & 51.53 & 55.80 \\
\bottomrule
\end{tabular}
\end{table}

\begin{table}[H]
\centering
\small
\setlength{\tabcolsep}{4pt}
\renewcommand{\arraystretch}{1.15}
\caption{
Per-task common-sense reasoning accuracy on Phi-3 Medium Instruct.
}
\label{tab:per_task_phi3_medium}
\begin{tabular}{llcccccccc}
\toprule
\textbf{Sparsity} & \textbf{Method} & \textbf{WG} & \textbf{ARC-E} & \textbf{ARC-C} & \textbf{HS} & \textbf{OBQA} & \textbf{PIQA} & \textbf{MMLU} & \textbf{Avg.} \\
\midrule
0\% & N/A & 76.56 & 81.36 & 61.60 & 82.76 & 50.60 & 81.66 & 76.14 & 72.95 \\
\midrule
\multirow{2}{*}{10\%}
& SliceGPT & 76.56 & 85.98 & 63.74 & 80.37 & 49.60 & 81.83 & 71.79 & 72.84 \\
& Ours & 77.43 & 84.34 & 61.77 & 80.38 & 48.60 & 81.18 & 74.21 & 72.56 \\
\midrule
\multirow{2}{*}{20\%}
& SliceGPT & 73.24 & 82.66 & 58.36 & 74.26 & 46.00 & 79.05 & 66.81 & 68.63 \\
& Ours & 74.98 & 82.62 & 58.02 & 75.46 & 46.20 & 79.71 & 69.80 & 69.54 \\
\midrule
\multirow{2}{*}{25\%}
& SliceGPT & 71.74 & 78.83 & 54.69 & 70.33 & 44.00 & 77.04 & 59.58 & 65.17 \\
& Ours & 75.77 & 81.19 & 55.72 & 71.86 & 44.80 & 77.58 & 66.14 & 67.58 \\
\midrule
\multirow{2}{*}{30\%}
& SliceGPT & 69.77 & 74.96 & 50.43 & 66.43 & 42.40 & 74.86 & 53.63 & 61.78 \\
& Ours & 73.32 & 78.28 & 53.24 & 66.65 & 44.00 & 76.06 & 62.77 & 64.90 \\
\bottomrule
\end{tabular}
\end{table}

\section{Results with RedPajama calibration}

The results presented use FineWeb-Edu for language modeling and common-sense reasoning evaluation results. In this section, we explore whether our method continues to improve performance when using a different calibration set. Specifically, we generate results on Llama-3.2 3B Instruct and Llama-3.1 8B Instruct using 1,024 samples of length 2,048 from the RedPajama \citep{redpajama} dataset and show results in Table~\ref{tab:redpajama_results}.

\begin{table}[h]
\centering
\small
\setlength{\tabcolsep}{4pt}
\renewcommand{\arraystretch}{1.15}
\caption{
Results using RedPajama as the calibration set.
We report KL divergence, perplexity (PPL), and average downstream accuracy (Acc.).
}
\label{tab:redpajama_results}
\begin{tabular}{llcccccc}
\toprule
& &
\multicolumn{3}{c}{\textbf{Llama-3.2 3B Instruct}} &
\multicolumn{3}{c}{\textbf{Llama-3.1 8B Instruct}} \\
\cmidrule(lr){3-5}
\cmidrule(lr){6-8}
\textbf{Sparsity}
& \textbf{Method}
& \(\mathbf{\KL}\) & \textbf{PPL} & \textbf{Acc.}
& \(\mathbf{\KL}\) & \textbf{PPL} & \textbf{Acc.} \\
\midrule
0\%
& N/A
& 0.0 & 11.76 & 60.55
& 0.0 & 7.21 & 68.53 \\
\midrule

\multirow{2}{*}{10\%}
& SliceGPT
& 0.2985 & 16.43 & 56.81
& 0.5505 & 12.60 & 64.60 \\
& Ours
& 0.1540 & 13.95 & 57.25
& 0.4458 & 11.49 & 65.18 \\
\midrule

\multirow{2}{*}{20\%}
& SliceGPT
& 0.6387 & 23.04 & 50.55
& 1.0121 & 20.31 & 57.25 \\
& Ours
& 0.4209 & 18.31 & 51.44
& 0.8330 & 17.09 & 57.93 \\
\midrule

\multirow{2}{*}{25\%}
& SliceGPT
& 0.8899 & 29.58 & 46.61
& 1.2805 & 26.59 & 51.91 \\
& Ours
& 0.6143 & 22.48 & 47.91
& 1.0887 & 22.10 & 52.83 \\
\midrule

\multirow{2}{*}{30\%}
& SliceGPT
& 1.1147 & 36.77 & 43.23
& 1.5838 & 35.99 & 46.64 \\
& Ours
& 0.8363 & 29.58 & 44.01
& 1.3503 & 28.64 & 47.48 \\
\bottomrule
\end{tabular}
\end{table}

\section{Results on non-instruction tuned models}
\label{sec:base_results}

Our main experiments focus on instruction-tuned models. To evaluate whether our method remains effective for models without instruction tuning, we additionally prune the base Llama-2 7B \citep{llama_2}, Llama-3.2 3B, and Llama-3.1 8B models and report results in Table~\ref{tab:base_results}. Our method achieves lower KL divergence and perplexity than SliceGPT across every model and sparsity level, and higher average common-sense reasoning accuracy in all settings.

\begin{table}[h]
\centering
\footnotesize
\setlength{\tabcolsep}{3pt}
\renewcommand{\arraystretch}{1.15}
\caption{
Results on non-instruction-tuned models.
We report KL divergence, perplexity (PPL), and average downstream accuracy (Acc.).
}
\label{tab:base_results}
\begin{tabular}{llccccccccc}
\toprule
& &
\multicolumn{3}{c}{\textbf{Llama-2 7B}} &
\multicolumn{3}{c}{\textbf{Llama-3.2 3B}} &
\multicolumn{3}{c}{\textbf{Llama-3.1 8B}} \\
\cmidrule(lr){3-5}
\cmidrule(lr){6-8}
\cmidrule(lr){9-11}
\textbf{Sparsity}
& \textbf{Method}
& \(\mathbf{\KL}\) & \textbf{PPL} & \textbf{Acc.}
& \(\mathbf{\KL}\) & \textbf{PPL} & \textbf{Acc.}
& \(\mathbf{\KL}\) & \textbf{PPL} & \textbf{Acc.} \\
\midrule
0\%
& N/A
& 0.0 & 5.47  & 61.56 
& 0.0 & 7.82 & 62.19
& 0.0 & 6.24 & 68.03 \\
\midrule

\multirow{2}{*}{10\%}
& SliceGPT
& 0.3736 & 8.16 & 60.01
& 0.1770 & 15.59 & 57.44
& 0.9215 & 15.92 & 64.75 \\
& Ours
& 0.2007 & 6.86 & 60.10
& 0.1340 & 12.56 & 58.31
& 0.5836 & 11.42 & 65.53 \\
\midrule

\multirow{2}{*}{20\%}
& SliceGPT
& 0.9157 & 14.68 & 56.57
& 0.4287 & 30.46 & 51.78
& 1.4205 & 26.42 & 58.99 \\
& Ours
& 0.5178 & 9.62 & 56.82
& 0.3377 & 19.59 & 53.25
& 1.1735 & 20.82 & 60.42 \\
\midrule

\multirow{2}{*}{25\%}
& SliceGPT
& 1.1383 & 18.49 & 53.79
& 0.5927 & 40.73 & 48.23
& 1.7261 & 35.85 & 55.18 \\
& Ours
& 0.7199 & 11.88 & 54.90
& 0.4675 & 24.87 & 50.82
& 1.4607 & 27.75 & 55.82 \\
\midrule

\multirow{2}{*}{30\%}
& SliceGPT
& 1.3793 & 23.77 & 51.29
& 0.7746 & 57.20 & 44.02
& 2.1285 & 53.59 & 50.06 \\
& Ours
& 0.9473 & 15.02 & 51.78
& 0.6173 & 32.48 & 47.07
& 1.7200 & 35.84 & 52.13 \\
\bottomrule
\end{tabular}
\end{table}

\section{Analysis of Dataset-average Curvature}
\label{sec:curv_analysis}

\begin{figure*}[t]
\centering
\includegraphics[width=\linewidth]{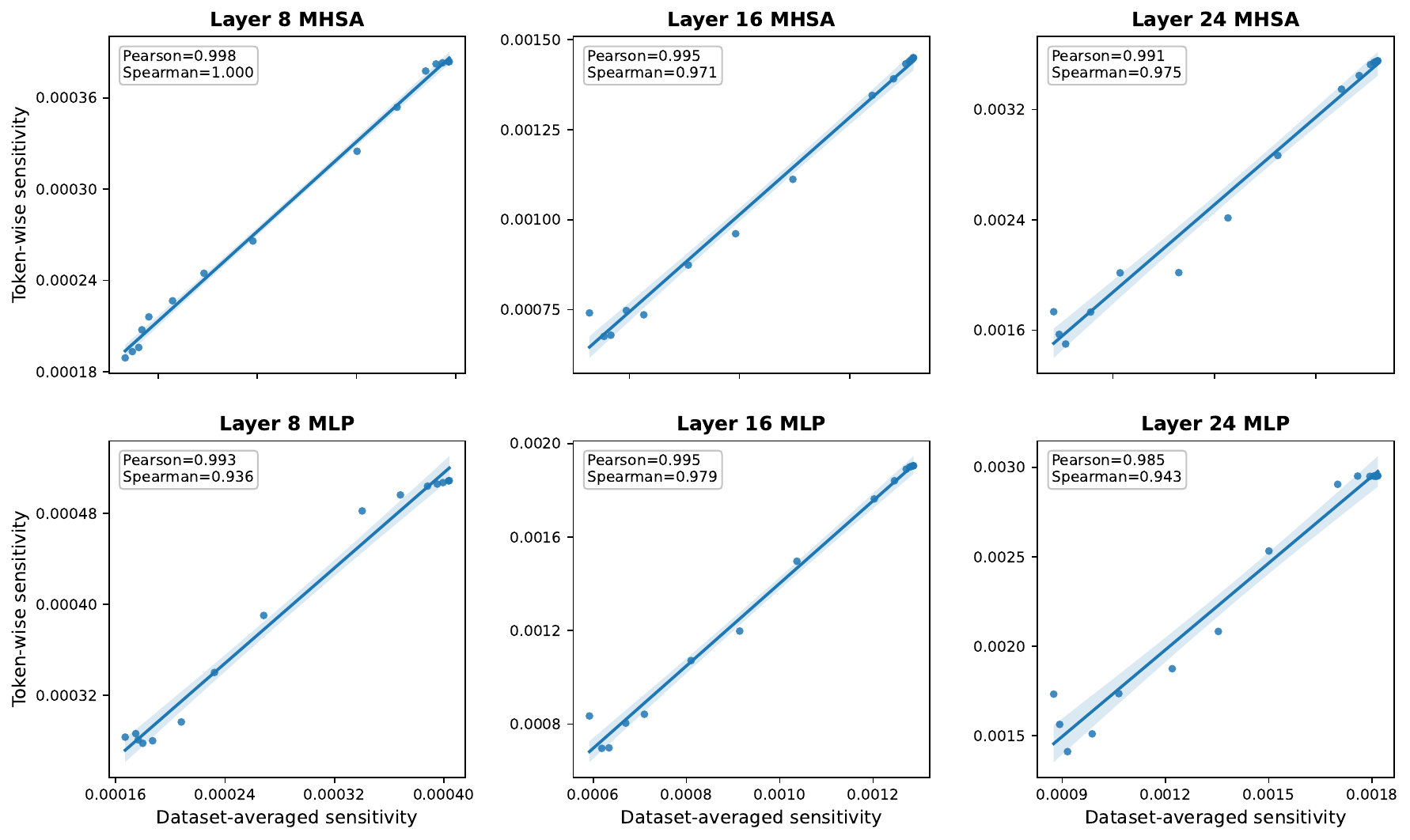}
\caption{
Comparison of dataset-averaged and token-wise sensitivity across candidate pruning bases.
For each basis, we rescale its induced perturbations by a single scalar such that all bases have equal total reconstruction error, isolating differences in output sensitivity from differences in perturbation magnitude.
Each panel reports the correlation between the objective computed using the dataset-averaged curvature matrix \(H\) and the corresponding objective computed using token-specific matrices \(H_i(s)\).
The strong agreement indicates that averaging sensitivity across the calibration distribution preserves the relative sensitivity of pruning-relevant directions.
}
\label{fig:dataset_vs_token_sens}
\end{figure*}

Our objective replaces the token-dependent curvature matrices \(H_i(s)\) with a single dataset-averaged matrix \(H\). While this makes basis generation computationally tractable, it removes the explicit dependence between each token activation and its corresponding output sensitivity. In particular, if the directions to which the model output is sensitive vary substantially across tokens, a shared matrix \(H\) may fail to accurately characterize the sensitivity of the perturbations induced by pruning.

We empirically evaluate the effect of this approximation by sampling \(K=128\) tokens from the calibration set and estimating the corresponding token-specific curvature matrix \(H_i(s)\) for each token. Because individual estimates are substantially noisier than the dataset-level estimate used by our method, we use \(64\) Monte Carlo samples from the output distribution for each \(H_i(s)\).

For a candidate pruning basis \(U\), let

$$
r_i(s;U) = UU^\top x_i(s)
$$

denote the corresponding reconstruction residual. We compare the sensitivity assigned to these residuals by the dataset-averaged metric,

$$
\mathcal{L}_{\mathrm{avg}}(U)
=
\frac{1}{K}
\sum_{s,i}
r_i(s;U)^\top H\,r_i(s;U),
$$

with the sensitivity obtained using the corresponding token-specific matrices,

$$
\mathcal{L}_{\mathrm{tok}}(U)
=
\frac{1}{K}
\sum_{s,i}
r_i(s;U)^\top H_i(s)\,r_i(s;U).
$$

Directly comparing these quantities across different bases can confound sensitivity with reconstruction error: a basis that produces larger residuals will generally receive a larger value under both metrics regardless of whether \(H\) accurately approximates the token-specific sensitivity. We therefore rescale the residual induced by each basis using a single scalar \(\gamma_U\), chosen such that all candidate bases produce the same total reconstruction error. Defining

$$
\widetilde r_i(s;U)
=
\gamma_U r_i(s;U),
$$

we evaluate the reconstruction-error-matched objectives

$$
\widetilde{\mathcal{L}}_{\mathrm{avg}}(U)
=
\frac{1}{K}
\sum_{s,i}
\widetilde r_i(s;U)^\top
H
\widetilde r_i(s;U)
$$

and

$$
\widetilde{\mathcal{L}}_{\mathrm{tok}}(U)
=
\frac{1}{K}
\sum_{s,i}
\widetilde r_i(s;U)^\top
H_i(s)
\widetilde r_i(s;U).
$$

Figure~\ref{fig:dataset_vs_token_sens} compares these two objectives across candidate pruning bases at several layers. Despite replacing the individual \(H_i(s)\) with a shared dataset-level metric, we observe strong correlation between \(\widetilde{\mathcal{L}}_{\mathrm{avg}}\) and \(\widetilde{\mathcal{L}}_{\mathrm{tok}}\). Importantly, this agreement persists after matching reconstruction error across bases, indicating that the correlation is not explained solely by differences in perturbation magnitude. Instead, the dataset-averaged curvature matrix largely preserves the relative sensitivity assigned to pruning-relevant perturbation directions. Together with the correlation between our surrogate objective and the resulting output KL divergence shown in Figure \ref{fig:surrogate_kl}, these results support the use of a shared dataset-averaged sensitivity matrix for basis selection.

\end{document}